\theoremstyle{definition}
\newtheorem{definition}{Definition}
\newtheorem{thm}{Theorem}
\newtheorem{remark}{Remark}
\newcommand{\casper}{{\textsc{Casper}}}
\gdef\@copyrightpermission{
  \begin{minipage}{0.3\columnwidth}\href{https://creativecommons.org/licenses/by/4.0/}{\includegraphics[width=0.90\textwidth]{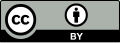}} 
  \end{minipage}\hfill
  \begin{minipage}{0.7\columnwidth}\href{https://creativecommons.org/licenses/by/4.0/}{This work is licensed under a Creative Commons Attribution International 4.0 License.} 
  \end{minipage}
  \vspace{5pt}
}
\begin{document}

\title{Causality-Aware Spatiotemporal Graph Neural Networks for Spatiotemporal Time Series Imputation}

\author{Baoyu Jing}
\orcid{0000-0003-1564-6499} 
\affiliation{
  \institution{University of Illinois}
  \city{Urbana-Champaign}
  \state{IL}
  \country{USA}
}
\email{baoyuj2@illinois.edu}

\author{Dawei Zhou}
\orcid{0000-0002-7065-2990} 
\affiliation{
  \institution{Virginia Polytechnic Institute and State University}
  \city{Blacksburg}
  \state{VA}
  \country{USA}
}
\email{zhoud@vt.edu}

\author{Kan Ren}
\orcid{0000-0002-4032-9615} 
\affiliation{
  \institution{Microsoft Research Asia}
  \city{Shanghai}
  \country{China}
}
\email{kan.ren@microsoft.com}

\author{Carl Yang}
\orcid{0000-0001-9145-4531} 
\affiliation{
  \institution{Emory University}
  \city{Atlanta}
  \state{GA}
  \country{USA}
}
\email{j.carlyang@emory.edu}

\renewcommand{\shortauthors}{Baoyu Jing, Dawei Zhou, Kan Ren, \& Carl Yang}

\begin{abstract}
Spatiotemporal time series are usually collected via monitoring sensors placed at different locations, which usually contain missing values due to various failures, such as mechanical damages and Internet outages.
Imputing the missing values is crucial for analyzing time series.
When recovering a specific data point, most existing methods consider all the information relevant to that point regardless of the cause-and-effect relationship. 
During data collection, it is inevitable that some unknown confounders are included, e.g., background noise in time series and non-causal shortcut edges in the constructed sensor network.
These confounders could open backdoor paths and establish non-causal correlations between the input and output.
Over-exploiting these non-causal correlations could cause overfitting.
In this paper, we first revisit spatiotemporal time series imputation from a causal perspective and show how to block the confounders via the frontdoor adjustment.
Based on the results of frontdoor adjustment, we introduce a novel \underline{C}ausality-\underline{A}ware \underline{Sp}atiot\underline{e}mpo\underline{r}al Graph Neural Network (\casper), which contains a novel Prompt Based Decoder (PBD) and a Spatiotemporal Causal Attention (SCA).
PBD could reduce the impact of confounders and SCA could discover the sparse causal relationships among embeddings.
Theoretical analysis reveals that SCA discovers causal relationships based on the values of gradients.
We evaluate \casper\ on three real-world datasets, and the experimental results show that \casper\ could outperform the baselines and could effectively discover the causal relationships.

\end{abstract}

\begin{CCSXML}
<ccs2012>
<concept>
<concept_id>10002951.10003227.10003351</concept_id>
<concept_desc>Information systems~Data mining</concept_desc>
<concept_significance>500</concept_significance>
</concept>
<concept>
<concept_id>10002951.10003227.10003236.10003238</concept_id>
<concept_desc>Information systems~Sensor networks</concept_desc>
<concept_significance>500</concept_significance>
</concept>
</ccs2012>
\end{CCSXML}

\ccsdesc[500]{Information systems~Data mining}
\ccsdesc[500]{Information systems~Sensor networks}

\keywords{Spatiotemporal Time Series Imputation, Spatiotemporal Graph Neural Network, Causal Attention}

\maketitle

\section{Introduction}
Spatiotemporal data mining \cite{atluri2018spatio} is the cornerstone of analyzing and understanding the patterns of spacetime and human activities, such as environmental monitoring \cite{li2018diffusion,roach2020canon,zheng2015forecasting,marisca2022learning}, e-business \cite{kang2018self,zhou2020data,jing2024sterling,jing2022coin,yan2024reconciling,li2022graph,jing2021hdmi,yan2022dissecting} and social science \cite{zhao2017modeling,du2021new,yan2021bright,yan2024pacer,zeng2023parrot,zeng2024hierarchical,feng2022adversarial,feng2024ariel,zhou2023closed}.
Time series \cite{jing2024automated,wang2023networked,li2021outlier,zheng2024mulan} is one of the most common data types, which is usually collected by monitoring sensors.
For example, traffic flow time series \cite{li2018diffusion}, e.g., speed, is recorded by the radar sensors on roads.
Air pollution time series \cite{zheng2015forecasting}, e.g., concentrations of PM2.5, is collected from air quality monitoring sites across cities.

In the real world, it is not uncommon that the collected spatiotemporal time series is incomplete with missing data due to various failures, 
e.g., sensors have mechanical damage.
The missing data usually significantly impacts the process and conclusion of data analysis.
Therefore, how to reconstruct the missing data from the observed data, i.e., imputation, is a fundamental problem of spatiotemporal time series analysis.
In recent years, deep learning methods become the mainstream for time series imputation.
Most existing deep time series imputation methods \cite{cao2018brits,luo2018multivariate,luo2019e2gan} use Recurrent Neural Network (RNN) to capture the temporal dynamics of time series and autoregressively recover the missing data by the predicted values.
Recent deep learning methods~\cite{yildiz2022multivariate} propose to use non-autoregressive structures, e.g., Transformer \cite{vaswani2017attention}, to avoid the progressive error propagation incurred via the autoregression in RNN by concurrently considering the entire input context.
However, these methods only consider the temporal patterns yet overlook the spatial relationships among sensors, e.g., geographical distances.
To further account for spatial relationships, graph neural networks \cite{kipf2016semi,velivckovic2017graph,yan2024trainable,zenggraph,xuslog,zheng2021deeper} are extended to the spatiotemporal setting \cite{jing2021network,huang2020learning,marisca2022learning,wang2023networked}.
Although these methods have achieved impressive performance in recovering the missing values, 
they tend to include all the available information related to the missing point as references without distinguishing whether there is a causal relationship between them.

When collecting datasets, it is inevitable to include some unknown confounders \cite{pearl2018book}.
For example, the background noise might be recorded, and non-causal shortcut edges might be built for two sensors.
Let's take the air monitoring sensor network as a concrete example to understand the non-causal edges.
A common practice to build the network is adding an edge for two sensors if their distance is below a threshold \cite{marisca2022learning}. 
Although simple and usually effective, the distance-based network does not necessarily imply the real causality between sensors.
In the real world, air flow between two locations could be influenced by other factors, e.g., wind direction and terrain. 
An example is shown in Figure \ref{fig:maps} in Section \ref{sec:exp_vis}.
Simply exploiting the shortcut edges without discovering the causality could make the model overfit the training data and be vulnerable to noise during inference.  

To reduce the negative effects brought by confounders, we first review the process of spatiotemporal time series imputation from a causal perspective \cite{pearl2018book} to show the causal relationships among the input, output, embeddings, and confounders.
The results show that confounders could establish undesired non-causal shortcut backdoor paths between the input and output.
Then, we show how to eliminate the backdoor paths via the frontdoor adjustment \cite{pearl2018book}.
Based on the results of the frontdoor adjustment, we introduce a novel \underline{C}ausality-\underline{A}ware \underline{Sp}atiot\underline{e}mpo\underline{r}al Graph Neural Network (\casper), which is equipped with a novel Prompt Based Decoder (PBD) and a Spatiotemporal Causal Attention (SCA). 
The proposed PBD effectively reduces the impact of unknown confounders by injecting the global context information of datasets into the embeddings.
Inspired by \cite{jia2022visual}, which uses learnable prompts to capture the contextual information of downstream tasks when tuning visual models, 
PBD leverages prompts to learn the contextual information of datasets automatically rather than employing external models to approximate the context.
To further enforce sparse causality between embeddings, we introduce SCA, which determines the cause-and-effect relationship via a causal gate.
It can be theoretically proven that the proposed causal gate (1) enforces the sparsity since it converges to 0 or 1; 
(2) is a gradient-based explanation similar to \cite{selvaraju2017grad}, which determines the causality based on the values of gradients.
We extensively evaluate \casper\ on three real-world datasets.
The experimental results show that \casper\ could significantly outperform baselines and could effectively discover causality.

The major contributions of the paper are summarized as follows:
\begin{itemize}
    \item We review the spatiotemporal time series imputation task from a causal perspective, where we show the problems of the undesired confounders.
    Then, we show how to eliminate confounders via the frontdoor adjustment.
    \item We propose a novel Causality-Aware Spatiotemporal Graph Neural Network (\casper) based on the frontdoor adjustment.
    \casper\ is equipped with a novel Spatiotemporal Causal Attention (SCA) and a Prompt Based Decoder (PBD).
    PBD effectively blocks the backdoor paths and SCA explicitly reveals the causality between embeddings.
    \item We provide theoretical analysis to deeply understand how \casper\ determines causal and non-causal relationships.
    \item We evaluate \casper\ on three real-world datasets. The experimental results show that \casper\ could outperform the baselines and effectively discover causal relationships.
\end{itemize}
\section{Preliminary}
In this section, we briefly introduce the definitions of spatiotemporal time series and spatiotemporal time series imputation.
We also review the definitions of Granger causality and attention function.

\begin{definition}[Incomplete Spatiotemporal Time Series]
We denote an incomplete spatiotemporal time series with missing values as $\mathbf{G}=(\mathbf{X}, \mathbf{A}, \mathbf{M})$, where $\mathbf{X}\in\mathbb{R}^{N\times T}$ is the multivariate time series collected from $N$ sensors with totally $T$ steps,
$\mathbf{A}\in\mathbb{R}^{N\times N}$ is the adjacency matrix of the sensor network,
$\mathbf{M}\in\{0, 1\}^{N\times T}$ is the binary mask and 0/1 denotes the absence/presence of a data point.
    
\end{definition}

\begin{definition}[Spatiotemporal Time Series Imputation]
Given an incomplete spatiotemporal time series $\mathbf{G}=(\mathbf{X}, \mathbf{A}, \mathbf{M})$,
we denote
$\mathbf{Y}\in\mathbf{R}^{N\times T}$ 
as the complete time series of $\mathbf{X}$,
such that $\mathbf{X}=\mathbf{M}\odot{\mathbf{Y}}$, 
where $\odot$ is the Hadamard product.
The task is to build a function $\hat{\mathbf{Y}}=f(\mathbf{G})$ to minimize the reconstruction error, e.g., Mean Absolute Error (MAE), between $\hat{\mathbf{Y}}$ and 
$\mathbf{Y}$.
\end{definition}


\begin{definition}[Granger Causality \cite{granger1969investigating,cheng2022cuts}]\label{def:granger_causality}
Let $\mathbf{X}\in\mathbb{R}^{N\times T}$ be the values of past $T$ steps of $N$ time series, and $\hat{y}_{i,T+1}=f_i(\mathbf{X})\in\mathbb{R}$ be the \emph{time series forecasting function} predicting the future value of the $i$-th time series at the $T+1$ step, where $i\in\{1,\dots, N\}$.
The $i'$-th time series is said to
Granger cause
the $i$-th time series
if there exists a point $x'_{i',t'}\neq x_{i',t'}$, $t'\in\{1,\dots,T\}$, such that $f_i(\mathbf{X}')\neq f_i(\mathbf{X})$, where $\mathbf{X}'$ is obtained by replacing $x_{i',t'}$ in $\mathbf{X}$ with $x'_{i',t'}$.
\end{definition}

Generally, if $x_{i',t'}$ impacts the prediction of the future value of the $i$-th time series, then the $i'$-th time series Granger causes the $i$-th time series.
In the case that $f_i$ is a linear model:
\begin{equation}\label{eq:linear}
    \hat{y}_{i,T+1}=f_i(\mathbf{X})=\sum_{i'=1,t'=1}^{N,T}\alpha_{i',t'}x_{i',t'}.
\end{equation}
if the coefficient $\alpha_{i',t'}\neq0$, then the $i'$-th time series Granger causes the $i$-th time series.


\begin{definition}[Attention Function]\label{def:attention}
Let $\mathbf{q}$ and $\{\mathbf{k}_{i}\}_{i=1}^{N}$ be the $d$-dimensional query and key embeddings, 
let $s$ and $f_v$ be the scoring and message functions, then the attention function is defined as:
\begin{equation}\label{eq:attetnion}
    \mathbf{h} = \sum_{i=1}^N\alpha_i\mathbf{v}_i,\quad \alpha_i=\frac{\exp(s(\mathbf{q},\mathbf{k}_{i}))}{\sum_{i'=1}^N\exp(s(\mathbf{q},\mathbf{k}_{i'}))},\quad \mathbf{v}_i=f_v(\mathbf{q},\mathbf{k}_i),
\end{equation}
where $\mathbf{h}\in\mathbb{R}^d$ is the output, and $\mathbf{v}_i$ is the message from $\mathbf{k}_i$ to $\mathbf{q}$.
\end{definition}

\section{Methodology}
In this section, we first provide a causal view of the spatiotemporal imputation task, and show how to eliminate the impact of unknown confounds by frontdoor adjustment, based on which, we introduce a novel Causality-Aware Spatiotemporal Graph Neural Network (\casper).
Finally, we provide further analysis of \casper.

\subsection{Causal View of Spatiotemporal Imputation}



Given an incomplete spatiotemporal time series $\mathbf{G}=(\mathbf{X},\mathbf{A},\mathbf{M})$, 
a standard deep imputation model $f=f^D\circ f^E$, where $f^E$, $f^D$ are the encoder and decoder, works as follows: 
(1) $f^E$ extracts embeddings $\mathcal{H}=\{\mathbf{h}_{i,t}\}_{i=1,t=1}^{N,T}$ from $\mathbf{G}$, 
(2) $f^D$ generates predictions $\{\hat{y}_{i,t}\}_{i=1,t=1}^{N,T}$ based on $\mathcal{H}$ to recover $\mathcal{Y}=\{{y}_{i,t}\}_{i=1,t=1}^{N,T}$.  
The model $f$ is trained by a reconstruction error e.g., MAE or RMSE.
Since minimizing MAE (or RMSE) is equivalent to maximizing the log-likelihood of Laplace (or Gaussian) distribution \cite{hodson2022root}, and thus we can view the objective of spatiotemporal imputation as maximizing $P(\mathcal{Y}|\mathcal{G})$.
Most existing studies focus on maximizing $P(\mathcal{Y}|\mathcal{G})$ yet few discuss the cause-and-effect relationship between $\mathcal{G}$ and $\mathcal{Y}$.
In this paper, we study their causality based on the Structure Causal Model (SCM)~\cite{pearl2018book}.

\begin{figure}
    \centering
    \includegraphics[width=0.28\textwidth]{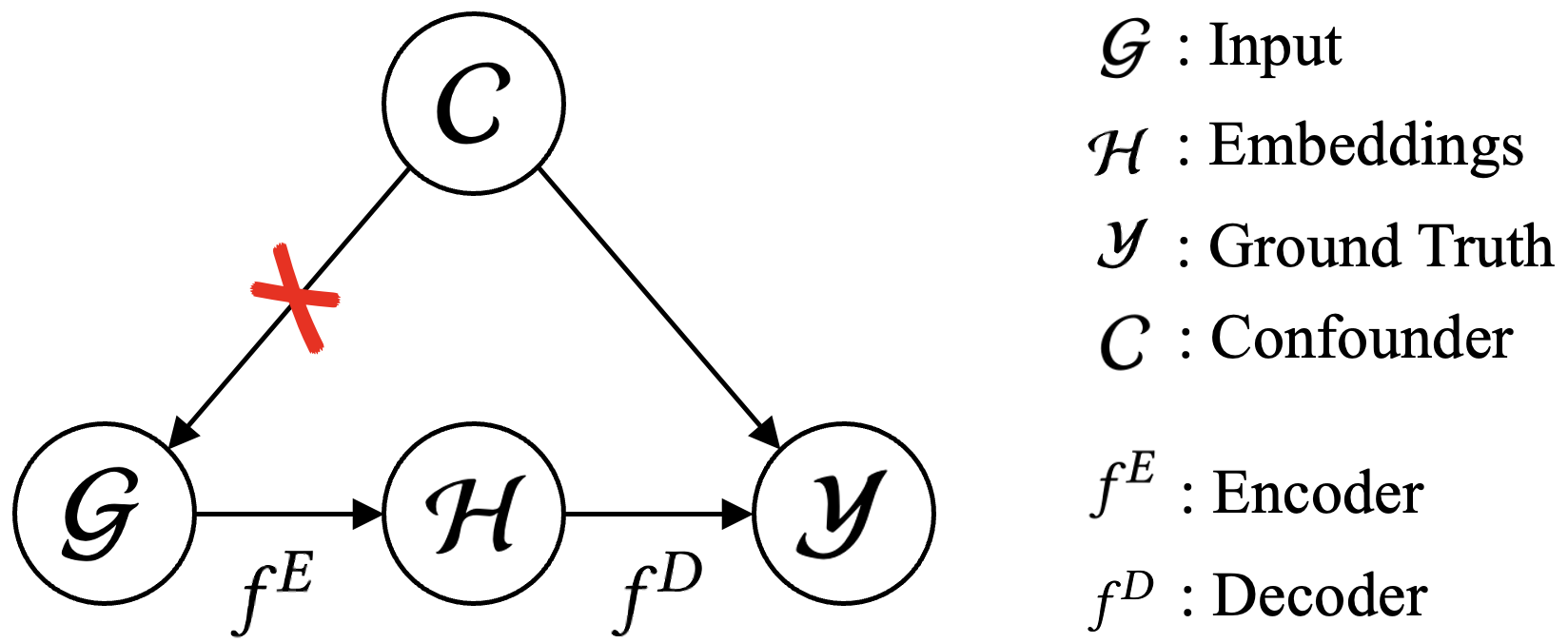}
    \caption{The Structure Causal Model (SCM) for spatiotemporal imputation. 
    The frontdoor adjustment removes the edge between confounder $\mathcal{C}$ and input $\mathcal{G}$.}
    \label{fig:scm}
\end{figure}

\paragraph{Structure Causal Model}
During data collection, it is inevitable that some unknown confounders $\mathcal{C}$ are included in datasets, which influence both $\mathcal{G}$ and $\mathcal{Y}$.
For example, sensors might record random background noise, and the constructed sensor network might contain shortcut edges.
The undesired information might bridge the input $\mathcal{G}$ and the output $\mathcal{Y}$ with spurious correlations,
which could lead to overfitting and make the model error-prone.

The causal relationship between $\mathcal{G}$, $\mathcal{H}$, $\mathcal{Y}$ and $\mathcal{C}$ can be modeled by the Structure Causal Model (SCM) \cite{pearl2018book} shown in Figure~\ref{fig:scm}. 
First, it is evident that $\mathcal{C}$ and $\mathcal{H}$ are not d-separable \cite{murphy2012machine}, since $\mathcal{C}$ can reach $\mathcal{H}$ via the path $\mathcal{C}\rightarrow\mathcal{G}\rightarrow\mathcal{H}$. 
This means that $\mathcal{H}$ and $\mathcal{C}$ are not independent and thus $\mathcal{H}$ contains information of $\mathcal{C}$.
Second, besides $\mathcal{G}\rightarrow\mathcal{H}\rightarrow\mathcal{Y}$, $\mathcal{C}$ introduces backdoor paths between $\mathcal{G}$ and $\mathcal{Y}$, as well as $\mathcal{H}$ and $\mathcal{Y}$: $\mathcal{G}\leftarrow\mathcal{C}\rightarrow\mathcal{Y}$, $\mathcal{H}\leftarrow\mathcal{G}\leftarrow\mathcal{C}\rightarrow\mathcal{Y}$.
The model $f$ might take advantage of the backdoor paths to make decisions instead of struggling to discover the real cause-and-effect relationships  \cite{ sui2022causal,yang2021causal}.
Our goal is to eliminate the backdoor paths.



\paragraph{Frontdoor Adjustment}
In statistics \cite{murphy2012machine}, a simple way to exclude the variable $\mathcal{C}$ in the SCM in Figure~\ref{fig:scm} is to marginalize it out.
However, marginalization requires $\mathcal{C}$ to be observable and measured by the marginal distribution $P(\mathcal{C})$, but in spatiotemporal imputation, $\mathcal{C}$ is usually unknown and difficult to measure.
Rather than directly marginalizing $\mathcal{C}$ out, we resort to the frontdoor adjustment \cite{pearl2018book}, which uses Pearl's do-calculus \cite{pearl2018book} to block the backdoor paths.
We follow the three steps of the frontdoor adjustment as follows.
\begin{enumerate}
    \item[(1)] \emph{Remove the backdoor path from $\mathcal{G}$ to $\mathcal{H}$.} 
    Given $\mathcal{G}$, there is no backdoor path from $\mathcal{G}$ to $\mathcal{H}$. Note that $\mathcal{G}$ cannot reach $\mathcal{H}$ via $\mathcal{G}\leftarrow\mathcal{C}\rightarrow\mathcal{Y}\leftarrow\mathcal{H}$ according to the d-separation theory \cite{murphy2012machine}.
    Therefore, we have:
    \begin{equation}
        P(\mathcal{H}|do(\mathcal{G}))=P(\mathcal{H}|\mathcal{G}).
    \end{equation}
    \item[(2)] \emph{Remove the backdoor path from $\mathcal{H}$ to $\mathcal{Y}$.}
    There is a backdoor path between $\mathcal{H}$ and $\mathcal{Y}$: $\mathcal{H}\leftarrow\mathcal{G}\leftarrow\mathcal{C}\rightarrow\mathcal{Y}$. This backdoor path can be blocked by marginalizing out $\mathcal{G}$:
    \begin{equation}
        P(\mathcal{Y}|do(\mathcal{H})) = \sum_{\mathbf{G}}P(\mathcal{Y}|\mathcal{H},\mathbf{G}_M)P(\mathbf{G}).
    \end{equation}
    \item[(3)] \emph{Combine the results of the above two steps:}
    \begin{equation}\label{eq:front_door}
    \begin{split}
        P(\mathcal{Y}|do(\mathcal{G})) &= \sum_{i,t}P(\mathbf{h}_{i,j}|do(\mathcal{G}))P(\mathcal{Y}|do(\mathbf{h}_{i,t}))\\
        &=\sum_{i,t}P(\mathbf{h}_{i,t}|\mathcal{G})\sum_{\mathbf{G}}P(\mathcal{Y}|\mathbf{h}_{i,t},\mathbf{G})P(\mathbf{G}). 
    \end{split}
    \end{equation}
\end{enumerate}
In general, $P(\mathbf{h}_{i,t}|\mathcal{G})$ can be viewed as the encoder $f^E$, and the rest part, including $\sum_{\mathbf{G}}P(\mathcal{Y}|\mathbf{h}_{i,t}, \mathcal{G})P(\mathbf{G})$ and the sum over all data points $\sum_{i,t}$, can be viewed as the decoder $f^D$.
In the next subsection, we show how to implement Equation \eqref{eq:front_door}.

\subsection{Architecture of \casper}
In this subsection, based on Equation \eqref{eq:front_door}, we propose a novel \underline{C}usality-\underline{A}ware \underline{Sp}atiot\underline{e}mpo\underline{r}al Graph Neural Network (\casper).
We first present an overview of \casper, consisting of a Prompt Based Decoder (PBD) and an encoder with Spatiotemporal Causal Attention (SCA).
Next, we elaborate PBD and SCA in detail.

\begin{figure}
    \centering
    \includegraphics[width=.35\textwidth]{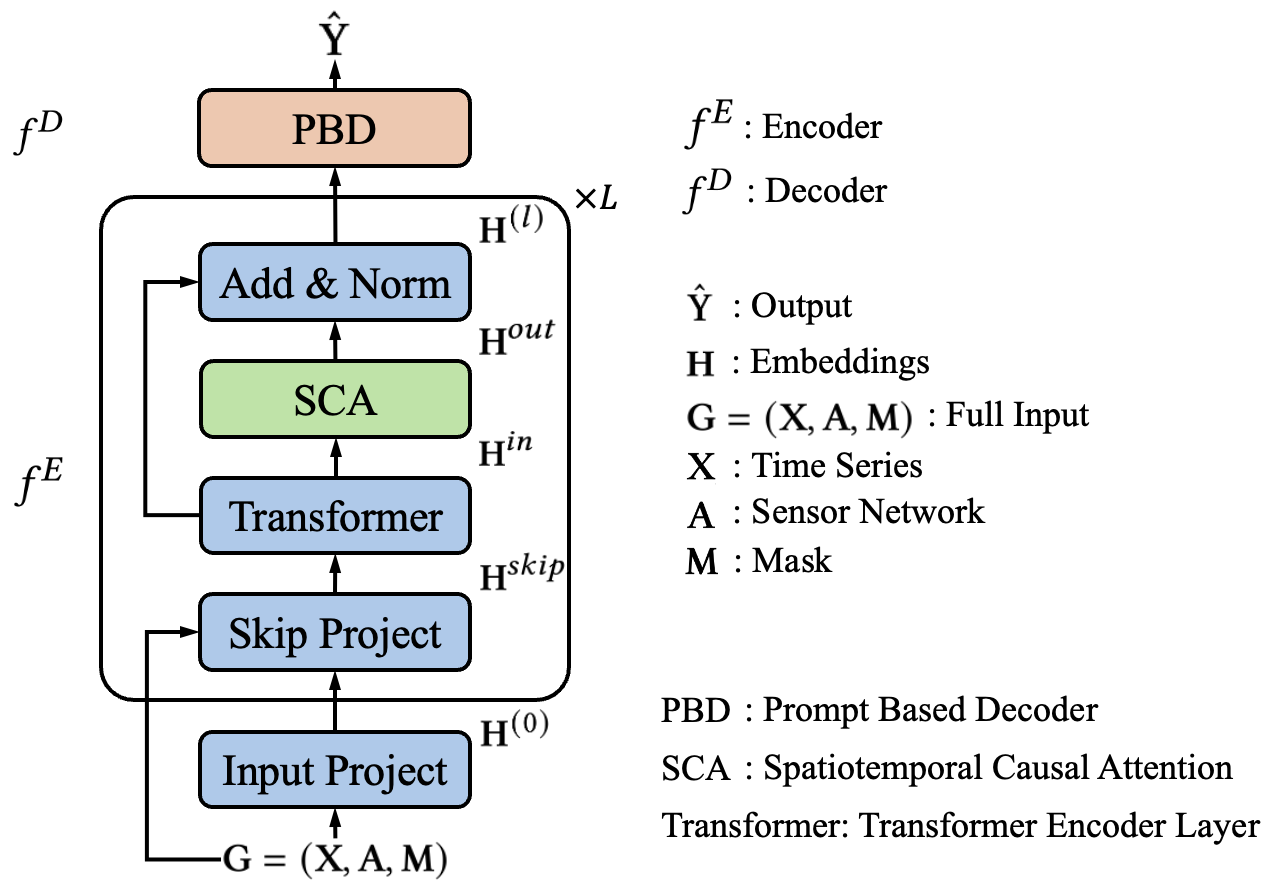}
    \caption{Overview of \casper.}
    \label{fig:overview}
\end{figure}

\paragraph{Overview}
Figure \ref{fig:overview} shows an overview of \casper.
The encoder $f^E$ is comprised of an \emph{input project} and $L$ layers of the combination of \emph{skip project}, \emph{transformer}, \emph{SCA} and \emph{add \& norm}.
Let $\mathbf{m}\in\mathbb{R}^{d}$ be the embedding for the missing points.
The \emph{input project} module encodes the raw input $\mathbf{G}=(\mathbf{X}, \mathbf{A}, \mathbf{M})$ into $\mathbf{H}^{(0)}$ via:
\begin{equation}
    \mathbf{H}^{(0)}=\text{MLP}(\mathbf{X})\odot\mathbf{M} + \mathbf{m}\odot(1-\mathbf{M}),
\end{equation}
where MLP stands for Multi-Layer Perceptron and $\odot$ denotes the Hadamard product.
The \emph{skip project} module prevents gradient vanishing and improves the performance by injecting the $\mathbf{G}$ into the embeddings from the previous layer $\mathbf{H}^{(l-1)}$:
\begin{equation}
    \mathbf{H}^{skip}=\mathbf{H}^{(l-1)} + \text{MLP}(\mathbf{X})\odot\mathbf{M} + \mathbf{m}\odot(1-\mathbf{M}).
\end{equation}
The \emph{transformer} encoder layer \cite{vaswani2017attention} learns temporal information for each time series within $\mathbf{H}^{skip}$: 
\begin{equation}
    \mathbf{H}^{in}=\text{Transformer}(\mathbf{H}^{skip}).
\end{equation}
\emph{SCA} discovers spatiotemporal causal relationships among embeddings based on $\mathbf{A}$, and encodes causal information into embeddings:
\begin{equation}
    \mathbf{H}^{out} = \text{SCA}(\mathbf{H}^{in}, \mathbf{A}).
\end{equation}
The final embeddings of the $l$-th layer are given by:
\begin{equation}
    \mathbf{H}^{(l)} = \text{LayerNorm}(\mathbf{H}^{in} + \mathbf{H}^{out}).
\end{equation}
Given the embeddings $\mathbf{H}=\mathbf{H}^{(L)}$ obtained by the encoder $f^E$, the \emph{PBD} module in $f^D$ generates the predictions $\hat{\mathbf{Y}}=\{\hat{y}_{i,t}\}_{i=1,t=1}^{N,T}$:
\begin{equation}
    \hat{\mathbf{Y}} = \text{PBD}(\mathbf{H}).
\end{equation}

\begin{figure}
    \centering
    \includegraphics[width=.25\textwidth]{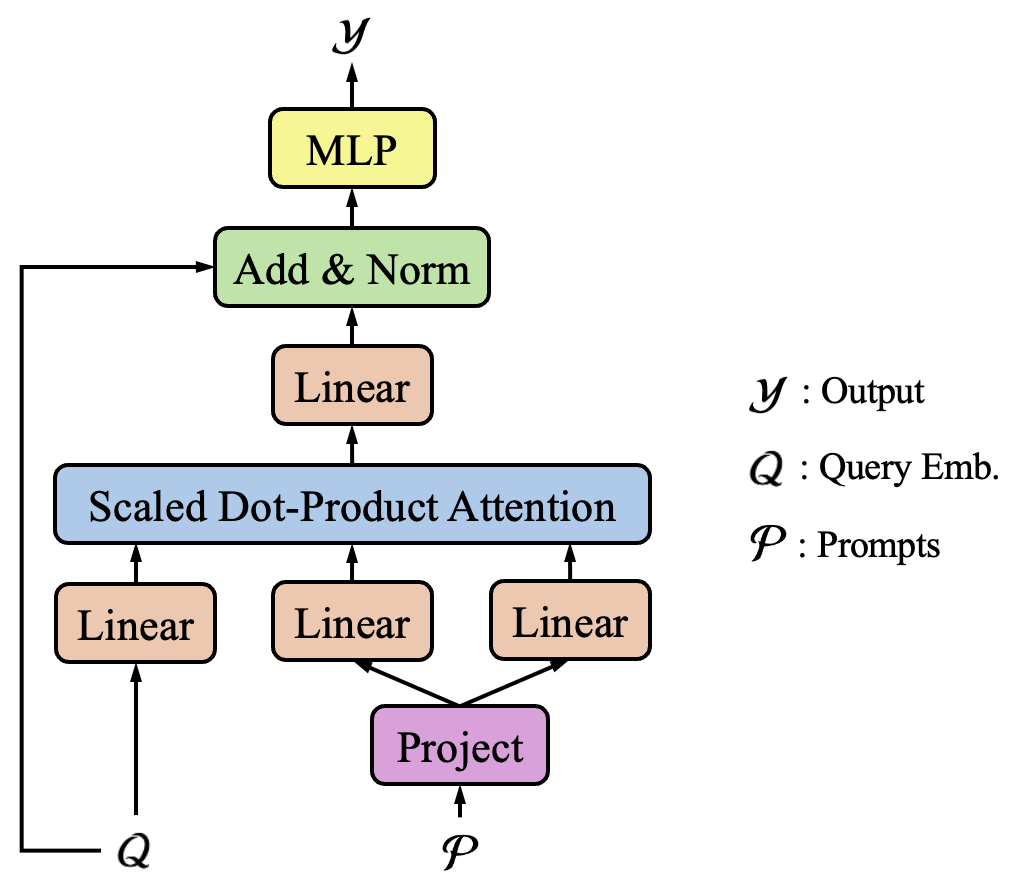}
    \caption{Prompt Based Decoder (PBD).}
    \label{fig:pbd}
\end{figure}

\paragraph{Prompt Based Decoder}
Suppose we are given the input $\mathbf{G}$ and the target is to recover $\mathcal{Y}=y_{i,t}$. 
In Equation \eqref{eq:front_door}, the decoder $f^D$ is comprised of (1) a sum over all possible $\mathbf{G}'$ in the dataset $\sum_{\mathbf{G}'}P(y_{i,t}|\mathbf{h}_{i',t'}, \mathbf{G}')P(\mathbf{G}')$ and (2) a sum over all data points $\sum_{i',t'}$ in $\mathbf{G}$.
For (2), since the encoders nowadays, e.g., Transformer \cite{vaswani2017attention}, are so powerful that could encode sufficient context information $\mathbf{G}$ in $\mathbf{h}_{i,t}$, and thus the decoder $f^D$ could only take $\mathbf{h}_{i,t}$ as input~\cite{marisca2022learning}, instead of all possible $\mathbf{h}_{i',t'}$.
Therefore, we could drop $\sum_{i',t'}$ and only implement $\sum_{\mathbf{G}'}P(y_{i,t}|\mathbf{h}_{i,t}, \mathbf{G}')P(\mathbf{G}')$.

Now, the challenge is how to implement the sum over $\mathbf{G}'$.
Simple solutions include randomly sampling a set from the training data, or clustering $\mathbf{G}'$ into $K$ clusters and using the cluster centers as an approximation.
However, random sampling could be unstable in practice, and clustering requires extra pre-trained models to extract embeddings of $\mathbf{G}'$ in advance.
Inspired by \cite{jia2022visual}, which uses prompts to capture the context information of the downstream task, we introduce a Prompt Based Decoder (PBD) to automatically capture the global context information of the dataset during model training.

An illustration of the proposed PBD is shown in Figure \ref{fig:pbd}.
$\mathcal{Q}=\mathbf{h}_{i,t}$ is the query, $\mathcal{P}=\{\mathbf{p}_{n}\}_{n=1}^{N_P}$ is the set of learnable prompts, which are randomly initialized embedding vectors.
In Figure \ref{fig:pbd}, the \emph{Project} is a linear function followed with a LayerNorm \cite{ba2016layer}.
Details of the scaled dot-product attention can be found in \cite{vaswani2017attention}, and it can be easily extended into the multi-head version as in \cite{vaswani2017attention}.

\paragraph{Spatiotemporal Causal Attention}
Attention functions (Definition \ref{def:attention}) have become indispensable in deep learning models \cite{vaswani2017attention,jing2022x}, which could effectively capture the context information for the target embedding.
Although attention scores could show the correlation between embeddings, correlation does not necessarily imply causality and thus sometimes could induce undesired non-causal information into embeddings \cite{sui2022causal}.
Based on the frontdoor adjustment, PBD could eliminate the impact of unknown confounders $\mathcal{C}$ and ensure the causality between the input $\mathcal{G}$ and the output $\mathcal{Y}$ by summing over $\mathbf{G}$, $i$ and $t$.
However, it guides the attention functions to discover the causal relationships at a high level, and thus the learned causal relationships, i.e., attention scores, might still be dense and a little bit difficult to interpret (see Figuers \ref{fig:softmax_4_1} and \ref{fig:softmax_4_2}).
To directly guide the model to discover the sparse causal relationships, we first define the causality for embeddings (Definition \ref{def:granger_causality_emb}) based on the Granger causality \cite{granger1969investigating} (Definition \ref{def:granger_causality}), and then introduce a novel Spatiotemporal Causal Attention (SCA) module to discover the sparse causality between embeddings.

\begin{definition}[Unconstrained Granger Causality for Embeddings]\label{def:granger_causality_emb}
Denote the target embedding as $\mathcal{Q}=\mathbf{h}_{i,t}^{in}$ and the set of context embeddings as $\mathcal{K}=\{\mathbf{h}_{i',t'}^{in}\}_{i'=1,t'=1}^{N,T}$.
Let $\mathbf{h}_{i,t}^{out}=f_{i,t}(\mathcal{Q}; \mathcal{K})$ be an embedding updating function, e.g., attention function.
If there is a $\mathbf{h}_{i',t'}^{in}\in\mathcal{K}$ s.t. changing the value of $\mathbf{h}_{i',t'}^{in}$ will change the value of $\mathbf{h}_{i,t}^{out}$, then $\mathbf{h}_{i',t'}^{in}$ Granger causes $\mathbf{h}_{i,t}^{out}$.
\end{definition}
We do not strictly enforce the time $t'\leq t$ for the imputation task as for the forecasting task (Definition \ref{def:granger_causality}), since 
(1) a missing value could appear at the beginning of the input time series segment, and there are no prior points available; 
(2) most imputation methods in the literature consider both past $t'\leq t$ and future $t' > t$ reference points $\mathbf{h}_{i',t'}$ for the data point to be imputed $\mathbf{h}_{i,t}$; 
(3) given the learned weight $w_{i',t'}$ between $\mathbf{h}_{i',t'}$ and $\mathbf{h}_{i,t}$, it is {easy} to distinguish whether it is from the past or future by comparing $t'$ and $t$, and thus we can easily obtain the time-constrained causal graph if necessary.

Let $f_{i,t}$ be an attention function as shown in Definition \ref{def:attention}:
\begin{equation}
    \mathbf{h}_{i,t}^{out} = \sum_{i'=1,t'=1}^{N,T}\alpha_{i',t'}\mathbf{v}_{i',t'},\quad \mathbf{v}_{i',t'}=f_v(\mathbf{h}_{i,t}^{in},\mathbf{h}_{i't'}^{in}),
\end{equation}
where $\alpha_{i',t'}$ is the attention weight, and $f_v$ is the message function.
According to Definition \ref{def:granger_causality_emb}, 
if $\alpha_{i',t'}\neq0$, then $\mathbf{h}_{i',t'}^{in}$ Granger causes $\mathbf{h}_{i,t}^{out}$;
otherwise $\mathbf{h}_{i',t'}^{in}$ does not Granger cause $\mathbf{h}_{i,t}^{out}$.
In practice, without directly manipulating, $\alpha_{i',t'}>0$ holds for many noisy messages $\mathbf{v}_{i',t'}$, as shown in Figure \ref{fig:softmax_4_1}-\ref{fig:softmax_2_1} in our experiments.
\begin{figure}[t]
    \centering
    \includegraphics[width=0.35\textwidth]{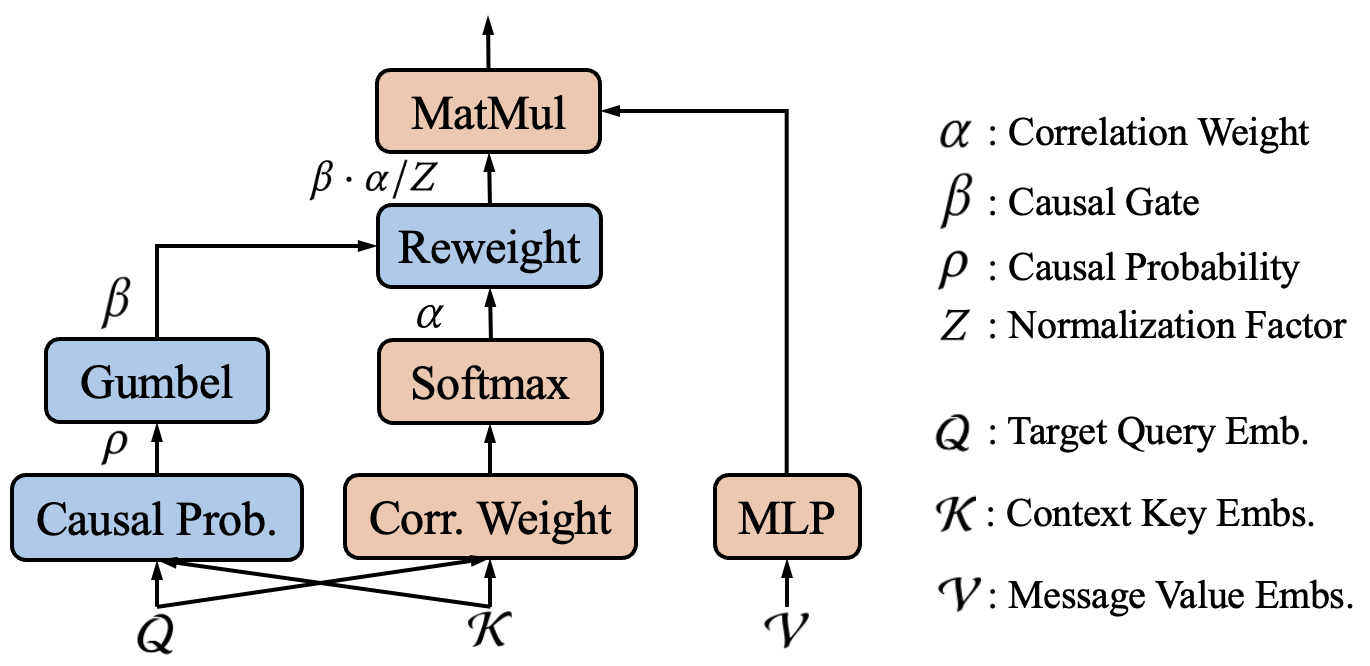}
    \caption{Spatiotemporal Causal Attention (SCA). Corr. Weight and Causal Prob. correspond to Equation \eqref{eq:corrw}\eqref{eq:rho1}. }
    \label{fig:sca}
\end{figure}
To further enforce the weights of the noisy points to be zero and discover Granger causality, we propose a novel SCA.
As shown in Figure \ref{fig:sca}, SCA is comprised of two components: (1) a spatiotemporal graph attention function (orange), which learns the correlation between embeddings; (2) a causal gate (blue), which discovers the causal and non-causal relationships.

Let $\mathcal{Q}=\mathbf{h}_{i,t}^{in}$ be the target query embedding, and $\mathcal{N}(i)$ be the neighbors of the $i$-th sensor, i.e., $\mathbf{A}[i,i']\neq0$, $\forall i'\in\mathcal{N}(i)$.
Denote the context keys as $\mathcal{K}=\{\mathbf{h}_{i',t'}^{in}\}_{i',t'}$ and the message values from $\mathcal{N}(i)$ to the point $(i,t)$ as $\mathcal{V}=\{\mathbf{v}_{i',t'}\}_{i',t'}$, where $i'\in\mathcal{N}(i)$, $t'\in\{1,\cdots,T\}$ and $\mathbf{v}_{i',t'}=\text{MLP}([\mathbf{h}_{i,t}^{in};\mathbf{h}_{i',t'}^{in}])$.
We define SCA as:
\begin{equation}\label{eq:sca}
    \mathbf{h}_{i,t}^{out} = \frac{1}{Z}\sum_{i'\in\mathcal{N}(i)}\sum_{t'=1}^T\beta_{i',t'}\alpha_{i',t'}\mathbf{v}_{i',t'},
\end{equation}
where $Z\in\mathbb{R}$ is a normalization factor, $\alpha_{i',t'}$ is the correlation weight, $\beta_{i',t'}\sim \text{Bernoulli}(\rho_{i',t'})$ is the causal gate, and $\rho_{i',t'}$ is the probability that $\mathbf{h}_{i',t'}^{in}$ Granger causes $\mathbf{h}_{i,t}^{out}$.
According to Definition~\ref{def:granger_causality_emb}, if $\beta_{i',t'}\cdot\alpha_{i',t'}>0$, then $\mathbf{h}_{i',t'}^{in}$ Granger causes $\mathbf{h}_{i,t}^{out}$; 
otherwise, there is no Granger causality between $\mathbf{h}_{i',t'}^{in}$ and $\mathbf{h}_{i,t}^{out}$.
The function of calculating correlation weight $\alpha_{i',t'}$ is given by:
\begin{align}
    \alpha_{i't'}&=\frac{\exp(s(\mathbf{h}_{i,t};\mathbf{h}_{i',t'}))}{\sum_{j\in\mathcal{N}(i)}\sum_{r=1}^T\exp(s(\mathbf{h}_{i,t};\mathbf{h}_{j,r}))}\label{eq:corrw},\\
    s(\mathbf{h}_{i,t}^{in};\mathbf{h}_{i',t'}^{in}) &= {(\mathbf{W}_Q\mathbf{h}_{i,t}^{in})^T(\mathbf{W}_K\mathbf{h}_{i',t'}^{in}})/{\sqrt{d}},
\end{align}
where $\mathbf{W}_Q,\mathbf{W}_K\in\mathbb{R}^{d\times d}$ are learnable weights, and $d$ is the size of the hidden dimension.
We build a neural network to learn the probability $\rho_{i',t'}$ of the causal gate $\beta_{i',t'}$:
\begin{align}
    \rho_{i',t'} = \sigma(\mathbf{W}_c[\mathbf{W}_{Q_c}\mathbf{h}_{i,t}^{in};\mathbf{W}_{K_c}\mathbf{h}_{i',t'}^{in}]),\label{eq:rho1}
\end{align}
where $\mathbf{W}_c\in\mathbb{R}^{1\times 2d},\mathbf{W}_{Q_c},\mathbf{W}_{K_c}\in\mathbb{R}^{d\times d}$ are learnable weights, and $\sigma$ is the Sigmoid activation function.

There are two practical issues of directly using $\rho$ in the above equation. 
First, the sampling operation $\beta_{i',t'}\sim \text{Bernoulli}(\rho_{i',t'})$ is in-differentiable.
To address this issue,
we use the differentiable re-parameterization technique Gumbel-Softmax \cite{jang2016categorical} to obtain $\beta_{i',t'}$:
\begin{equation}\label{eq:gumbel}
    \beta_{i',t'} = \frac{\exp((\log\rho_{i',t'}+g)/\tau)}{\exp((\log\rho_{i',t'}+g)/\tau) + \exp((\log(1-\rho_{i',t'})+g)/\tau)},
\end{equation}
where $g=-\log(-\log(u))$, $u\sim\text{Uniform}(0,1)$, and $\tau$ is the temperature parameter.

Second, if $\rho_{i',t'}$ is not close to 0 or 1, the model's decision could be ambiguous during inference.
For example, if $\rho_{i',t'}=0.2$, then for the same input data, for 20\% time, the model shows $\mathbf{h}_{i',t'}^{in}$ Granger causes $\mathbf{h}_{i,t}^{out}$, and for the other 80\% time, the model shows $\mathbf{h}_{i',t'}^{in}$ does not Granger cause $\mathbf{h}_{i,t}^{out}$. 
To avoid such an ambiguous situation, we enforce $\rho_{i',t'}\rightarrow0/1$ by placing the $l_1$ regularization over $\rho_{i',t'}$.

It can be theoretically proven that $\rho_{i',t'}$ will converge to 0 or 1 (see Section \ref{sec:theorem}).
Additionally, in practice, the correlation weight $\alpha$ can be easily extended to the multi-head version as in \cite{vaswani2017attention}.

\paragraph{Loss Function}\label{sec:loss}
\casper\ is trained by the masked MAE.
For a given spatiotemporal time series segment with $N$ nodes and $T$ length, the loss is defined as:
\begin{equation}\label{eq:loss}
    \mathcal{L} = \sum_{i=1,t=1}^{N,T}m_{i,t}\cdot|{y}_{i,t}-\hat{y}_{i,t}| + \lambda||\Phi||_1,
\end{equation}
where $m_{i,t}$ is the mask, $y_{i,t}$ is the ground-truth value, $\hat{y}_{i,t}$ is the predicted value, 
$\Phi$ is the set of all $\rho$ in SCA, $\lambda$ is a tunable coefficient, and $||\cdot||_1$ denotes the $l_1$ norm.

\subsection{Framework Analysis}
In this subsection, we provide further analysis of the proposed \casper, including theoretical analysis and complexity analysis.

\paragraph{Theoretical Analysis}\label{sec:theorem}
We theoretically prove that $\rho_{i',t'}$ in Equation \eqref{eq:rho1} will converge to 0 or 1 in Theorem \ref{thm:convergence}, and thus $\rho_{i',t'}$ indicates the Granger causality.
If $\rho_{i',t'}=0$, then $\beta_{i',t'}\cdot\alpha_{i',t'}=0$, showing that $\mathbf{h}^{in}_{i',t'}$ does not Granger cause $\mathbf{h}^{out}_{i,t}$.
Moreover, from the proof of Theorem \ref{thm:convergence}, it can be observed that $\rho_{i',t'}$ is actually a gradient-based explanation (see Remark \ref{remark:rho}), which determines causal and non-causal relationships based on the gradients.
Compared with the classic gradient-based explanation methods \cite{selvaraju2017grad}, 
which needs extra steps to calculate gradients after the model is trained,
the proposed $\rho$ has two advantages:
(1) $\rho$ does not require extra steps for calculating derivatives, and the value of $\rho$ could directly provide the explanation;
(2) the parameters associated with $\rho$ are jointly trained with the model, and thus it can guide the model to focus on the most important relationships during training.

\begin{thm}[Convergence of $\rho$]\label{thm:convergence}
$\rho$ could converge to 0 or 1 by updating its parameters based on $\mathcal{L}$ in Equation \eqref{eq:loss}.
\end{thm}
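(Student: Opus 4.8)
The plan is to analyze the gradient-descent dynamics of $\rho_{i',t'}$ and show that its only asymptotically stable equilibria are the boundary values $0$ and $1$. First I would exploit the fact that $\rho_{i',t'}=\sigma(z_{i',t'})\in(0,1)$, where $z_{i',t'}=\mathbf{W}_c[\mathbf{W}_{Q_c}\mathbf{h}_{i,t}^{in};\mathbf{W}_{K_c}\mathbf{h}_{i',t'}^{in}]$ is the pre-activation logit. Because every $\rho$ is nonnegative, the regularizer simplifies to $\lambda\|\Phi\|_1=\lambda\sum_{i',t'}\rho_{i',t'}$, so that $\mathcal{L}=\mathcal{L}_r+\lambda\sum_{i',t'}\rho_{i',t'}$, where $\mathcal{L}_r$ denotes the masked reconstruction term. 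Differentiating through the sigmoid then gives the key identity
\begin{equation}
\frac{\partial\mathcal{L}}{\partial z_{i',t'}}=\Big(\frac{\partial\mathcal{L}_r}{\partial\rho_{i',t'}}+\lambda\Big)\,\rho_{i',t'}(1-\rho_{i',t'}),
\end{equation}
in which the factor $\rho(1-\rho)$ already vanishes exactly at $\rho\in\{0,1\}$, pinning the boundary points as equilibria of the flow on $z_{i',t'}$.

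Next I would study the sign of the outer factor $c_{i',t'}:=\partial\mathcal{L}_r/\partial\rho_{i',t'}+\lambda$ to decide toward which boundary $\rho$ drifts. Treating gradient descent as the continuous flow $\dot z_{i',t'}=-c_{i',t'}\,\rho_{i',t'}(1-\rho_{i',t'})$ and assuming $c_{i',t'}$ keeps a fixed sign bounded away from zero over the relevant phase of training, one checks that $z_{i',t'}$ is monotone: if $c_{i',t'}>0$ then $z_{i',t'}$ decreases, and integrating the asymptotic relation $\dot z\approx -c\,e^{z}$ shows $z_{i',t'}\to-\infty$, i.e. $\rho_{i',t'}\to0$; symmetrically $c_{i',t'}<0$ forces $\rho_{i',t'}\to1$. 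Thus convergence to a boundary is governed entirely by the sign of $c_{i',t'}$, and any interior stationary point (where $\partial\mathcal{L}_r/\partial\rho=-\lambda$) is an unstable equilibrium that generic trajectories escape.

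I would then make the sign condition interpretable by expanding $\partial\mathcal{L}_r/\partial\rho_{i',t'}$ with the chain rule through the gate $\beta_{i',t'}$ and the output $\mathbf{h}_{i,t}^{out}$ of Equation~\eqref{eq:sca}:
\begin{equation}
\frac{\partial\mathcal{L}_r}{\partial\rho_{i',t'}}=\Big\langle\frac{\partial\mathcal{L}_r}{\partial\mathbf{h}_{i,t}^{out}},\,\tfrac{1}{Z}\alpha_{i',t'}\mathbf{v}_{i',t'}\Big\rangle\frac{\partial\beta_{i',t'}}{\partial\rho_{i',t'}},
\end{equation}
where $\partial\beta/\partial\rho>0$ for the Gumbel-Softmax relaxation of Equation~\eqref{eq:gumbel}. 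The inner product $\langle\partial\mathcal{L}_r/\partial\mathbf{h}_{i,t}^{out},\,\mathbf{v}_{i',t'}\rangle$ is exactly a gradient-based saliency of the message $\mathbf{v}_{i',t'}$ of the Grad-CAM type, which is what Remark~\ref{remark:rho} records. This yields the causal dichotomy: for a non-causal $\mathbf{h}_{i',t'}^{in}$ the saliency is negligible, so $c_{i',t'}\approx\lambda>0$ and $\rho_{i',t'}\to0$; for a genuinely causal one the negative saliency dominates $\lambda$, so $c_{i',t'}<0$ and $\rho_{i',t'}\to1$.

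The main obstacle will be justifying that $c_{i',t'}$ retains a fixed sign bounded away from zero, since $\mathcal{L}_r$ depends on $\rho$ through the \emph{stochastic} Gumbel-Softmax gate and through embeddings and weights that co-evolve during training. I would address the stochasticity by working with the expected gradient $\mathbb{E}_g[\partial\mathcal{L}_r/\partial\rho]$ over the Gumbel noise $g$, and handle the coupling by restricting attention to a late-training regime in which the saliency term has settled, so that the comparison $|\partial\mathcal{L}_r/\partial\rho|$ against $\lambda$ is stable; a secondary technical point is ruling out the interior equilibrium, which follows from its instability under the flow above.
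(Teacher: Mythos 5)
Your proposal is correct and follows essentially the same route as the paper's proof sketch: both isolate the $\rho(1-\rho)$ sigmoid factor, backpropagate the reconstruction loss through $\beta$ and $\mathbf{h}_{i,t}^{out}$ to obtain a saliency term $g=\bigl\langle \partial\hat{y}_{i,t}/\partial\mathbf{h}_{i,t}^{out},\,\tfrac{\alpha_{i',t'}}{Z}\mathbf{v}_{i',t'}\bigr\rangle$, and show that the sign of $\lambda-g$ (your $c_{i',t'}$) drives the logit monotonically to $\pm\infty$ under gradient descent with the other parameters frozen, so $\rho\to 0$ or $1$. Your variant works with the logit $z$ and a continuous-time flow rather than the paper's coordinate-wise update of $\mathbf{w}$, and you are more explicit about the fixed-sign and frozen-saliency assumptions the paper leaves implicit, but the decomposition and the key comparison against $\lambda$ are identical, including the gradient-based-explanation reading of Remark~\ref{remark:rho}.
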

\begin{proof}[Proof Sketch]
For simplicity, let's only consider the loss for a single point $(i,t)$, where $m_{i,t}=1$ and ${y}_{i,t} - \hat{y}_{i,t}>0$:
\begin{equation}\label{eq:loss_single}
    \mathcal{L}_{i,t} = {y}_{i,t} - \hat{y}_{i,t} + \lambda||\Phi||_1
\end{equation}
Since Equation \eqref{eq:rho1} is essentially a linear function with a Sigmoid activation, we can rewrite it as:
\begin{equation}
    \rho_{i',t'}=\sigma(\mathbf{w}^T\mathbf{h}^{in}),\quad \mathbf{h}^{in}=[\mathbf{h}_{i,t}^{in};\mathbf{h}_{i',t'}^{in}], \quad\mathbf{w}\in\mathbb{R}^{2d}
\end{equation}
Let $w_j=\mathbf{w}[j]$ and $h_j^{in}=\mathbf{h}^{in}[j]$,
then the gradient at $w_j$ is:
\begin{align}
\begin{split}
    \frac{\partial \mathcal{L}_{i,t}}{\partial w_j} &= \frac{\partial{y}_{i,t} - \hat{y}_{i,t}}{\partial \mathbf{h}_{i,t}^{out}}\cdot\frac{\partial \mathbf{h}_{i,t}^{out}}{\partial \beta_{i',t'}}\cdot\frac{\partial \beta_{i',t'}}{\partial \rho_{i',t'}}\cdot\frac{\partial \rho_{i',t'}}{\partial w_j} + \lambda\frac{\partial \rho_{i',t'}}{w_j}\\
    &= (\lambda-g)\rho_{i',t'}(1-\rho_{i',t'})h^{in}_j
\end{split}\\
\begin{split}\label{eq:g}
    g&=\frac{\partial\hat{y}_{i,t}}{\partial \mathbf{h}_{i,t}^{out}}\cdot\frac{\partial \mathbf{h}_{i,t}^{out}}{\partial \beta_{i',t'}}\cdot\frac{\partial \beta_{i',t'}}{\partial \rho_{i',t'}}=\frac{\partial\hat{y}_{i,t}}{\partial \mathbf{h}_{i,t}^{out}}\cdot\frac{\alpha_{i',t'}}{Z}\mathbf{v}_{i',t'}\cdot1
\end{split}
\end{align}
where $\mathbf{h}_{i,t}^{out},\beta_{i',t'}$ are from Equation \eqref{eq:gumbel}.
In gradient descent, the updating function of $w_j$ is:
\begin{equation}\label{eq:sgd}
w_j^{(k+1)} = w_j^{(k)} - \eta(\lambda-g)\rho_{i',t'}^{(k)}(1-\rho_{i',t'}^{(k)})h^{in}_j
\end{equation}
where $k$ is the iteration index and $\eta>0$ is the learning rate. 
As we only consider the parameters $\mathbf{w}$ of $\rho$, for simplicity, let's fix all other parameters in the model.
Therefore, ${\partial\hat{y}_{i,t}}/{\partial \mathbf{h}_{i,t}^{out}},\mathbf{v}_{i',t'},\alpha_{i',t'}$ are fixed.
The normalization factor $Z\leq1$, and $Z=1$ if and only if $\beta_{i',t'}=1$ for $\forall i',t'$.
$\rho_{i',t'}(1-\rho_{i',t'})>0$ since $\rho_{i',t'}\in(0,1)$. 
Now, suppose $g>\lambda$, if $h_j^{in}>0$, then ${w}^{(k+1)}_j>{w}^{(k)}_j$. 
When $k\rightarrow\infty$, $w_j^{(k)}\rightarrow+\infty$.
Otherwise, if $h_j^{in}<0$, then ${w}^{(k+1)}_j<{w}^{(k)}_j$.
When $k\rightarrow\infty$, $w_j^{(k)}\rightarrow-\infty$.
Therefore, when $k\rightarrow\infty$, $\mathbf{w}^T\mathbf{h}\rightarrow+\infty$.
As a result, $\rho_{i',t'}^{(k)}=\sigma({\mathbf{w}^{(k)}}^T\mathbf{h}^{in})\rightarrow1$, as $k\rightarrow\infty$.
Similarly, if $g<\lambda$, then $\rho_{i',t'}^{(k)}$ will converge to $0$.
\end{proof}

\begin{remark}[$\rho$ is a gradient based explanation]\label{remark:rho}
From the proof of Theorem \ref{thm:convergence}, it can be noted that,
if $g>\lambda$, then $\rho_{i',t'}\rightarrow1$; 
if $g<\lambda$, then $\rho_{i',t'}\rightarrow0$.
This phenomenon reflects that $\rho_{i',t'}$ serves as a binary indicator showing whether the gradient at $\rho_{i',t'}$ is greater or less than the threshold $\lambda$.
\end{remark}

\paragraph{Complexity Analysis}
The complexities of the input project, the skip project, and the transformer layer are $O(NT)$, $O(NT)$, and $O(NT^2)$.
The complexities of SCA  and PBD are $O(ET)$ and $O(NN_PT)$, where $E$ is the number of edges in the sensor network, and $N_P$ is the number of prompts.
The overall complexity is $O(\max(E,NT,NN_P)T)$.
In SCA, if we calculate attention weights and causal gates for each pair of the data points without using $\mathbf{A}$, then the complexity will be extremely high: $O(N^2T^2)\gg O(ET)$.
\section{Experiments}

\begin{table*}[t]
    \footnotesize
    \centering
    \caption{Performance (MAE, MSE) of different methods.}\label{tab:mae}
    \setlength\tabcolsep{2pt}
    \begin{tabular}{c|cc|cc|cc|cc|cc|cc}
    \toprule
        & \multicolumn{4}{c|}{General Missing} & \multicolumn{4}{c|}{Point Missing} & \multicolumn{4}{c}{Block Missing}\\
        \midrule
        & \multicolumn{2}{c|}{AQI-36} & \multicolumn{2}{c|}{AQI} & \multicolumn{2}{c|}{METR-LA} & \multicolumn{2}{c|}{PEMS-BAY} & \multicolumn{2}{c|}{METR-LA} & \multicolumn{2}{c}{PEMS-BAY} \\
        \midrule
         & MAE & MSE & MAE & MSE & MAE & MSE & MAE & MSE & MAE & MSE & MAE & MSE \\
        \midrule
        Mean & 53.48$\pm$0.00  & 4578.08$\pm$00.00 & 39.60$\pm$0.00 & 3231.04$\pm$00.00 & 7.56$\pm$0.00 & 142.22$\pm$0.00 & 5.42$\pm$0.00 & 86.59$\pm$0.00 & 7.48$\pm$0.00 & 139.54$\pm$0.00 & 5.46$\pm$0.00 & 87.56$\pm$0.00\\
        KNN & 30.21$\pm$0.00 & 2892.31$\pm$00.00 & 34.10$\pm$0.00 & 3471.14$\pm$00.00 & 7.88$\pm$0.00 & 129.29$\pm$0.00 & 4.30$\pm$0.00 & 49.80$\pm$0.00 & 7.79$\pm$0.00 & 124.61$\pm$0.00 & 4.30$\pm$0.00 & 49.90$\pm$0.00\\
        MF & 30.54$\pm$0.26 & 2763.06$\pm$63.35 & 26.74$\pm$0.24 & 2021.44$\pm$27.98 & 5.56$\pm$0.03 & 113.46$\pm$1.08 & 3.29$\pm$0.01 & 51.39$\pm$0.64 & 5.46$\pm$0.02 & 109.61$\pm$0.78 & 3.28$\pm$0.01 & 50.14$\pm$0.13\\
        MICE & 30.37$\pm$0.09 & 2594.06$\pm$07.17 & 26.98$\pm$0.10 & 1930.92$\pm$10.08 & 4.42$\pm$0.07 & 55.07$\pm$1.46 & 3.09$\pm$0.02 & 31.43$\pm$0.41 & 4.22$\pm$0.05 & 51.07$\pm$1.25 & 2.94$\pm$0.02 & 28.28$\pm$0.37\\
        VAR & 15.64$\pm$0.08 & 833.46$\pm$13.85 & 22.95$\pm$0.30 & 1402.84$\pm$52.63 & 2.69$\pm$0.00 & 21.10$\pm$0.02 & 1.30$\pm$0.00 & 6.52$\pm$0.01 & 3.11$\pm$0.08 & 28.00$\pm$0.76 & 2.09$\pm$0.10 & 16.06$\pm$0.73\\
        \midrule
        rGAIN & 15.37$\pm$0.26 & 641.92$\pm$33.89 & 21.78$\pm$0.50 & 1274.93$\pm$60.28 & 2.83$\pm$0.01 & 20.03$\pm$0.09 & 1.88$\pm$0.02 & 10.37$\pm$0.20 & 2.90$\pm$0.01 & 21.67$\pm$0.15 & 2.18$\pm$0.01 & 13.96$\pm$0.20\\
        BRITS & 14.50$\pm$0.35 & 662.36$\pm$65.16 & 20.21$\pm$0.22 & 1157.89$\pm$25.66 & 2.34$\pm$0.00 & 16.46$\pm$0.05 & 1.47$\pm$0.00 & 7.94$\pm$0.03 & 2.34$\pm$0.01 & 17.00$\pm$0.14 & 1.70$\pm$0.01 & 10.50$\pm$0.07\\
        \midrule
        ST-Transformer & 11.98$\pm$0.53 & 557.22$\pm$46.52 & 18.11$\pm$0.25 & 1135.46$\pm$89.27  & 2.16$\pm$0.00 & 13.66$\pm$0.03 & 0.74$\pm$0.00 & 1.96$\pm$0.03 & 3.54$\pm$0.00 & 52.22$\pm$0.99 & 1.70$\pm$0.02 & 20.37$\pm$0.43\\
        GRIN & 12.08$\pm$0.47 & 523.14$\pm$57.17 & 14.73$\pm$0.15 & 775.91$\pm$28.49 & 1.91$\pm$0.00 & 10.41$\pm$0.03 & 0.67$\pm$0.00 & 1.55$\pm$0.01 & 2.03$\pm$0.00 & 13.26$\pm$0.05 & 1.14$\pm$0.01 & 6.60$\pm$0.10\\
        SPIN & 11.77$\pm$0.54 & 455.53$\pm$12.27 & 13.92$\pm$0.15 & 773.60$\pm$26.64 & 1.90$\pm$0.01 & 18.47$\pm$0.31 & 0.70$\pm$0.01 & 1.91$\pm$0.01 & 1.98$\pm$0.01 & 18.47$\pm$0.31 & 1.06$\pm$0.02 & 7.42$\pm$0.16\\
        PoGeVon & 10.92$\pm$0.24 & 493.94$\pm$51.89 & 14.18$\pm$0.04 & 740.57$\pm$8.01 & 1.96$\pm$0.01 & 11.08$\pm$0.05 & 0.67$\pm$0.01 &  \textbf{1.51$\pm$0.03} & 1.95$\pm$0.01 & 13.08$\pm$0.08 & 1.54$\pm$0.02 & 17.18$\pm$0.48 \\
        \midrule
        \casper & \textbf{10.09$\pm$0.13} & \textbf{396.16$\pm$12.94} & \textbf{13.30$\pm$0.06} & \textbf{658.07$\pm$4.88} & \textbf{1.84$\pm$0.00} & \textbf{9.99$\pm$0.01} & \textbf{0.65$\pm$0.00} & 1.63$\pm$0.01 & \textbf{1.92$\pm$0.01} & \textbf{11.98$\pm$0.23} & \textbf{1.00$\pm$0.00} & \textbf{5.37$\pm$0.04}\\
        \bottomrule
    \end{tabular}
\end{table*}

\subsection{Experimental Setup}
In this subsection, we briefly explain the datasets, evaluation metrics, and baselines used for the experiments.

\paragraph{Datasets.}
Three public real-world benchmark datasets are used to evaluate \casper.
\textbf{AQI} \cite{zheng2015forecasting} is an hourly record of air pollutants from 437 air quality monitoring stations in China from May 2014 to April 2015.
We also use the popular \textbf{AQI-36} \cite{cao2018brits} which is a reduced version of the full AQI containing records from 36 sensor stations scattered around Beijing.
\textbf{METR-LA} \cite{li2018diffusion} contains traffic speed time series collected from 207 sensors on highways in Los Angeles for 4 months.
\textbf{PEMS-BAY} \cite{li2018diffusion} is a traffic speed time series collected from 325 sensors on highways in San Francisco Bay Area for 6 months.
The time series records in METR-LA and PEMS-BAY are collected every 5 minutes.
For AQI, METR-LA, and PEMS-BAY, the temporal window is set as $T=24$, and for AQI-36, the temporal window is 36.
To be consistent with prior works \cite{marisca2022learning}, the adjacency matrices of sensor networks are built by applying a thresholded Gaussian kernel \cite{shuman2013emerging,li2018diffusion} over the geographical distances between sensor stations.

For AQI and AQI-36, we use the evaluation masks in \cite{yi2016st,marisca2022learning} which simulates the real missing data distribution in the datasets.
We refer to this setting as the general missing.
For METR-LA and PEMS-BAY, we consider both points missing and block missing settings as in \cite{marisca2022learning}.
In point missing, 25\% data points are masked out.
In block missing, 5\% spatial blocks and 0.15\% temporal blocks ranging from 1 hour to 4 hours are masked out.

\paragraph{Evaluation Metrics.}
We use the standard Mean Absolute Error (MAE) and Mean Squared Error (MSE) between the ground truth values and the imputed values as the evaluation metrics.

\paragraph{Baselines.}
We consider three groups of methods as our baselines.
(1) Traditional statistical methods: 
the mean value of the sequence (MEAN); neighbor mean (KNN);
matrix factorization (MF); 
multiple imputations using chained equations (MICE \cite{white2011multiple});
vector auto-regression (VAR);
(2) Early deep learning models:
rGAIN \cite{cini2021filling}: an adversarial method similar to \cite{miao2021generative,luo2018multivariate};
BRITS \cite{cao2018brits}: a bidirectional RNN imputation method;
(3) Recent deep learning models:
ST-Transformer \cite{marisca2022learning}: a spatiotemporal extension of the original Transformer \cite{vaswani2017attention};
GRIN \cite{cini2021filling}: a graph enhanced recurrent neural network;
SPIN \cite{marisca2022learning}: a spatiotemporal graph attention based imputation model;
PoGeVon \cite{wang2023networked}: a recent spatiotemporal imputation method which is based on the position-aware spatiotemporal graph variational auto-encoder.
Whenever possible, the results of baselines are copied from the corresponding paper.

\paragraph{Implementation Details.}
Most of the training configurations follow prior works \cite{marisca2022learning}.
We set the size of embeddings as 32.
The numbers of layers for the encoder for AQI-36 and other datasets are 2 and 4 respectively.
We use the Adam optimizer \cite{kingma2014adam} with a learning rate of 0.0008 and a cosine scheduler to train the model.
The maximum number of epochs is 300 and the patience of early stopping is 40 epochs.
Batch size is fixed as 8.
During training, $p\in[0.2, 0.5, 0.8]$ data points are randomly masked out for each batch, and the loss is calculated based on these masked points.

\subsection{Overall Performance}
In this subsection, we show the overall performance (MSE and MAE) of \casper\ for the imputation task to demonstrate the overall competence of \casper\ for imputation.

The overall performance of different methods is presented in Table \ref{tab:mae}.
The upper, middle, and lower groups of baselines are the traditional statistical methods, the RNN methods and the recent methods (Transformer based and graph based methods).
Generally speaking, the RNN methods perform better than the statistical methods, and the recent methods further outperform the RNN methods.
When imputing a data point, these methods exploit all the available information in the context, without identifying the causal relationships between the data point and the context.
However, it is inevitable that some confounders are included in the data, such as the non-causal shortcut edges.
Over-reliance on the confounders could lead to overfitting and make the model susceptible to noise.
The proposed \casper\ could effectively remove the impact of confounders.
As shown in Table \ref{tab:mae}, \casper\ achieves the lowest overall MAE and MSE scores and also has lower standard deviations, demonstrating the effectiveness of enforcing the model to discover causality during training.

\begin{table}[h]
    \footnotesize
    \centering
    \caption{Ablation study on the AQI-36 dataset.}\label{tab:ablation}
    \begin{tabular}{l|cc}
        \toprule
        & MAE & MSE \\
        \midrule
        \casper & \textbf{10.09$\pm$0.13} & \textbf{396.16$\pm$12.94}\\
        \midrule
        w/o PBD (i.e., PBD$\rightarrow$MLP) & 10.36$\pm$0.11 & 445.03$\pm$29.03\\
        w/o SCA (i.e., w/o $\beta$) & 10.45$\pm$0.17 & 426.64$\pm$34.41\\
        w/o PBD, SCA & 10.84$\pm$0.20 & 472.24$\pm$42.77\\
        w/o PBD, SCA, $\mathbf{A}$ & 14.86$\pm$0.21 & 767.96$\pm$25.32\\
        \midrule
        Prompts $\rightarrow$ K-means Centers (max)  & 10.18$\pm$0.14 & 427.20$\pm$28.84\\ 
        Prompts $\rightarrow$ K-means Centers (avg)  & 10.23$\pm$0.15 & 421.43$\pm$19.40\\ 
        Prompts $\rightarrow$ Sampling (max) & 10.25$\pm$0.18 & 421.78$\pm$33.26\\
        Prompts $\rightarrow$ Sampling (avg) & 10.56$\pm$0.19 & 474.27$\pm$50.55\\
        \midrule
        unconstrained$\rightarrow$constrained causality & 10.83$\pm$0.19	& 477.85$\pm$45.62 \\
        w/o skip project & 10.35$\pm$0.21& 438.02$\pm$28.38 \\
        \bottomrule
    \end{tabular}

\end{table}

\subsection{Ablation Study}
In this subsection, we study the impact of different components in \casper, and the results are shown in Table \ref{tab:ablation}.

\paragraph{Effectiveness of PBD,SCA and $\mathbf{A}$.}
In the upper part of Table \ref{tab:ablation}, we investigate the impact of the Prompt-Based Decoder (PBD), the Spatiotemporal Causal Attention (SCA) and the sensor network $\mathbf{A}$.
First, a significant performance drop on MAE and MSE can be observed when we remove PBD (replace PBD with an MLP) and/or SCA (remove the causal gate $\beta$) from the full model \casper, 
indicating the effectiveness of the frontdoor adjustment and the causal gate for improving the overall performance. 
It is also worth noting that, on MSE, the standard deviations of the ablated versions of \casper\ (w/o PBD and/or SCA) are significantly higher than the full model \casper, demonstrating that enforcing the model to focus on the causality could improve its robustness.
Second, although $\mathbf{A}$ contains non-causal relationships among sensors, it still has critical contributions to the overall performance, demonstrating the necessity of considering $\mathbf{A}$ when imputing missing values.

\paragraph{Effectiveness of the Prompts.}
In the middle part of Table \ref{tab:ablation}, we demonstrate the effectiveness of using the learnable prompts to capture the global contextual information of the datasets by replacing the prompts with other approximations, including K-means cluster centers and randomly sampled data.
To obtain the other two approximations, 
we first pre-train an imputation model, i.e., \casper\ without SCA and PBD, and then extract the embedding of each training sample $\mathbf{G}$ by applying average/max pooling over the embeddings of all the points in $\mathbf{G}$.
For the cluster center approximation, we apply K-means over the embeddings to obtain 1,000 cluster centers.
For the sampling approximation, we randomly sample 1,000 embeddings for each training sample. 
Compared with the prompts, the cluster centers and randomly sampled embeddings not only perform worse but also require extra effort to obtain embeddings, which demonstrates the superiority of prompts.

\paragraph{Effectiveness of Other Components.}
In the lower part of Table \ref{tab:ablation}, we study the impact of time constraints and the skip project layer in the model.
First, if we enforce the time constraint, i.e., $t'\leq t$ in Definition \ref{def:granger_causality_emb}, for the imputation task, then the performance will significantly drop.
These results demonstrate that it is vital to take into consideration both the past and future points for the imputation task. 
Second, removing the skip project layer will have negative impacts on the overall performance, showing the power of the skip project, which aligns with the observation of the residue connections in the literature \cite{he2016deep,jing2021multiplex,yan2021dynamic}.



\begin{figure*}[t]
\centering
\begin{subfigure}[b]{.24\textwidth}
\includegraphics[width=\linewidth]{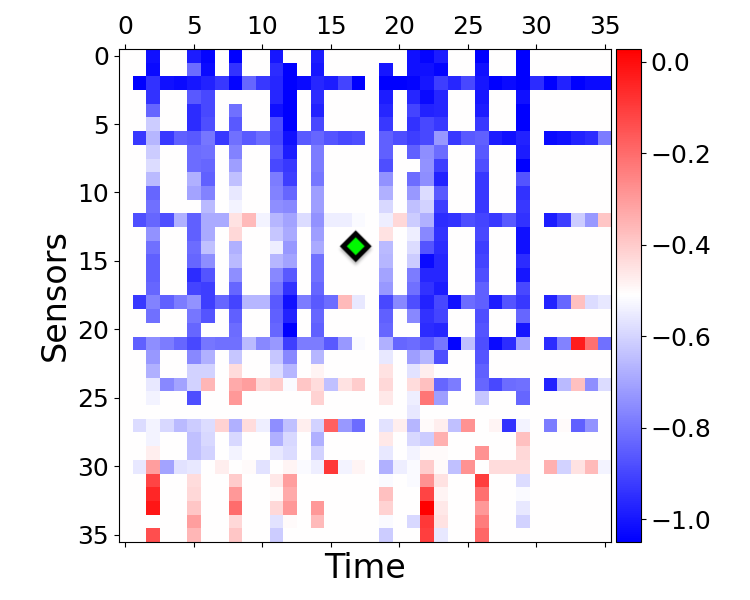}
 \caption{Input}\label{fig:x_1}
\end{subfigure}
\begin{subfigure}[b]{.24\textwidth}
  \includegraphics[width=\linewidth]{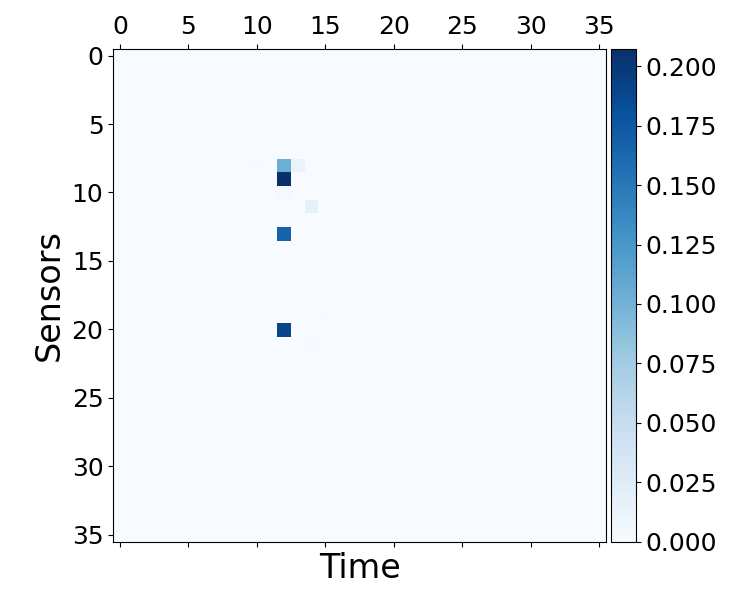}
  \caption{\casper}\label{fig:final_mat1}
\end{subfigure}
\begin{subfigure}[b]{.24\textwidth}
  \includegraphics[width=\linewidth]{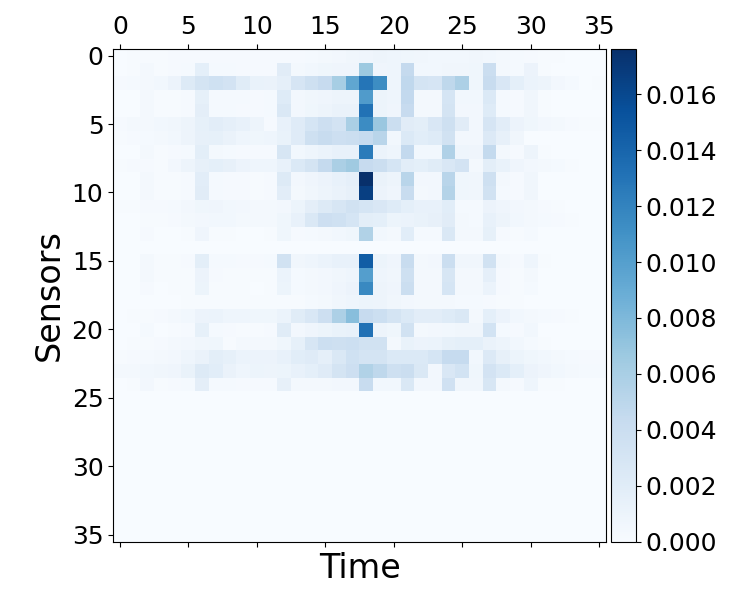}
  \caption{\casper\ w/o SCA}\label{fig:softmax_4_1}
\end{subfigure}
\begin{subfigure}[b]{.24\textwidth}
  \includegraphics[width=\linewidth]{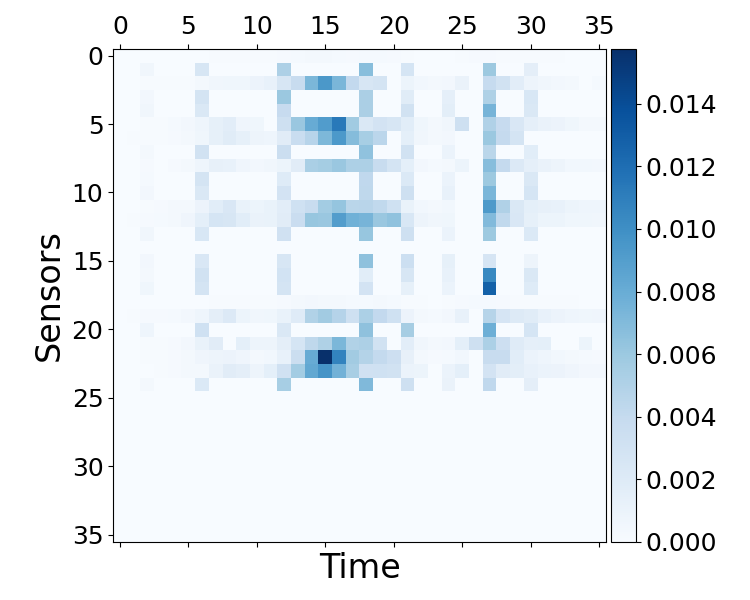}
  \caption{\casper\ w/o SCA, PBD}\label{fig:softmax_2_1}
\end{subfigure}
\begin{subfigure}[b]{.24\textwidth}
\includegraphics[width=\linewidth]{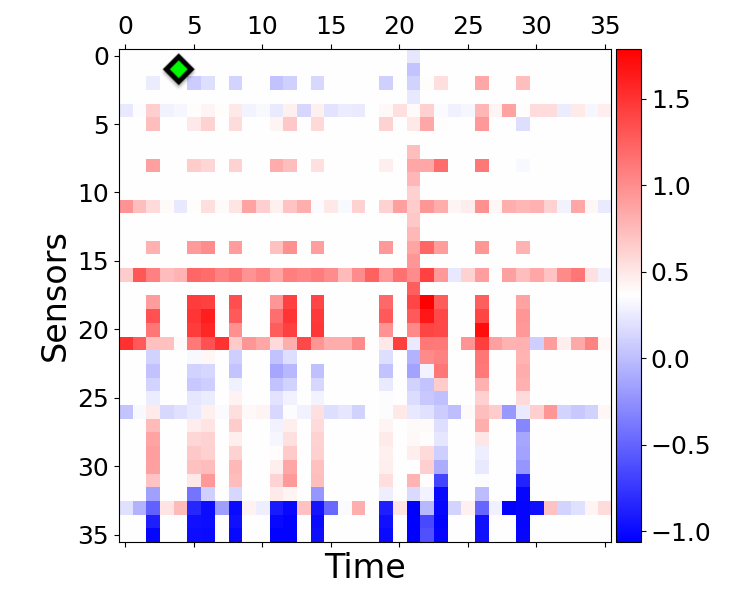}
 \caption{Input}\label{fig:x_2}
\end{subfigure}
\begin{subfigure}[b]{.24\textwidth}
  \includegraphics[width=\linewidth]{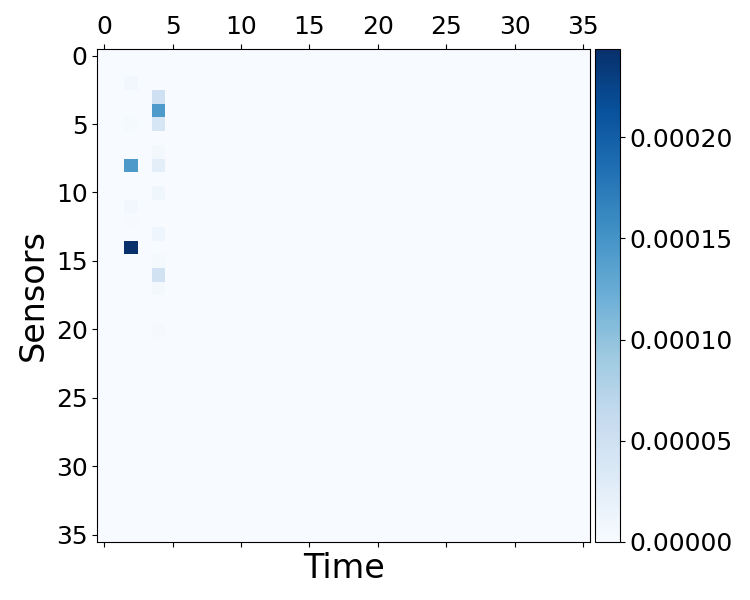}
  \caption{\casper}\label{fig:final_mat2}
\end{subfigure}
\begin{subfigure}[b]{.24\textwidth}
  \includegraphics[width=\linewidth]{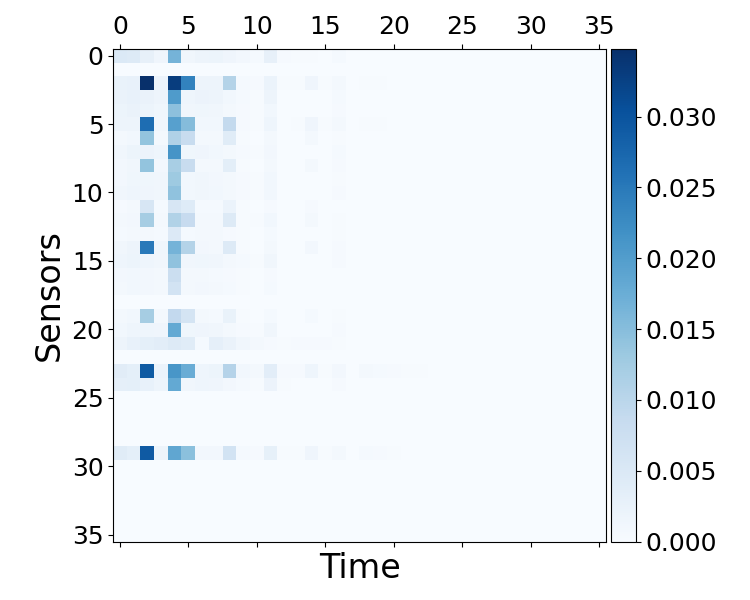}
  \caption{\casper\ w/o SCA}\label{fig:softmax_4_2}
\end{subfigure}
\begin{subfigure}[b]{.24\textwidth}
  \includegraphics[width=\linewidth]{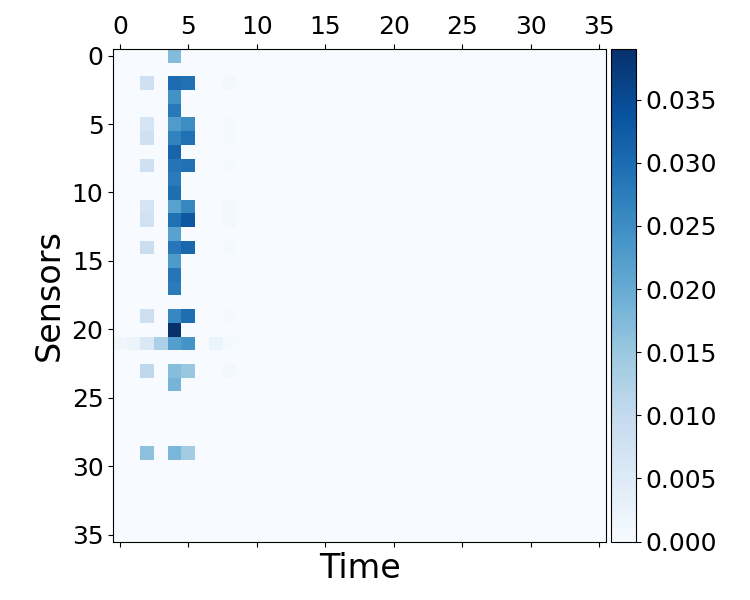}
  \caption{\casper\ w/o SCA, PBD}\label{fig:softmax_2_2}
\end{subfigure}
\caption{Input matrices and the associated attention maps of \casper\ and its ablated versions. Diamonds are the query points.}
\label{fig:att1}
\end{figure*}

\begin{figure}[t]
\centering
\begin{subfigure}[b]{.235\textwidth}
\includegraphics[width=\linewidth]{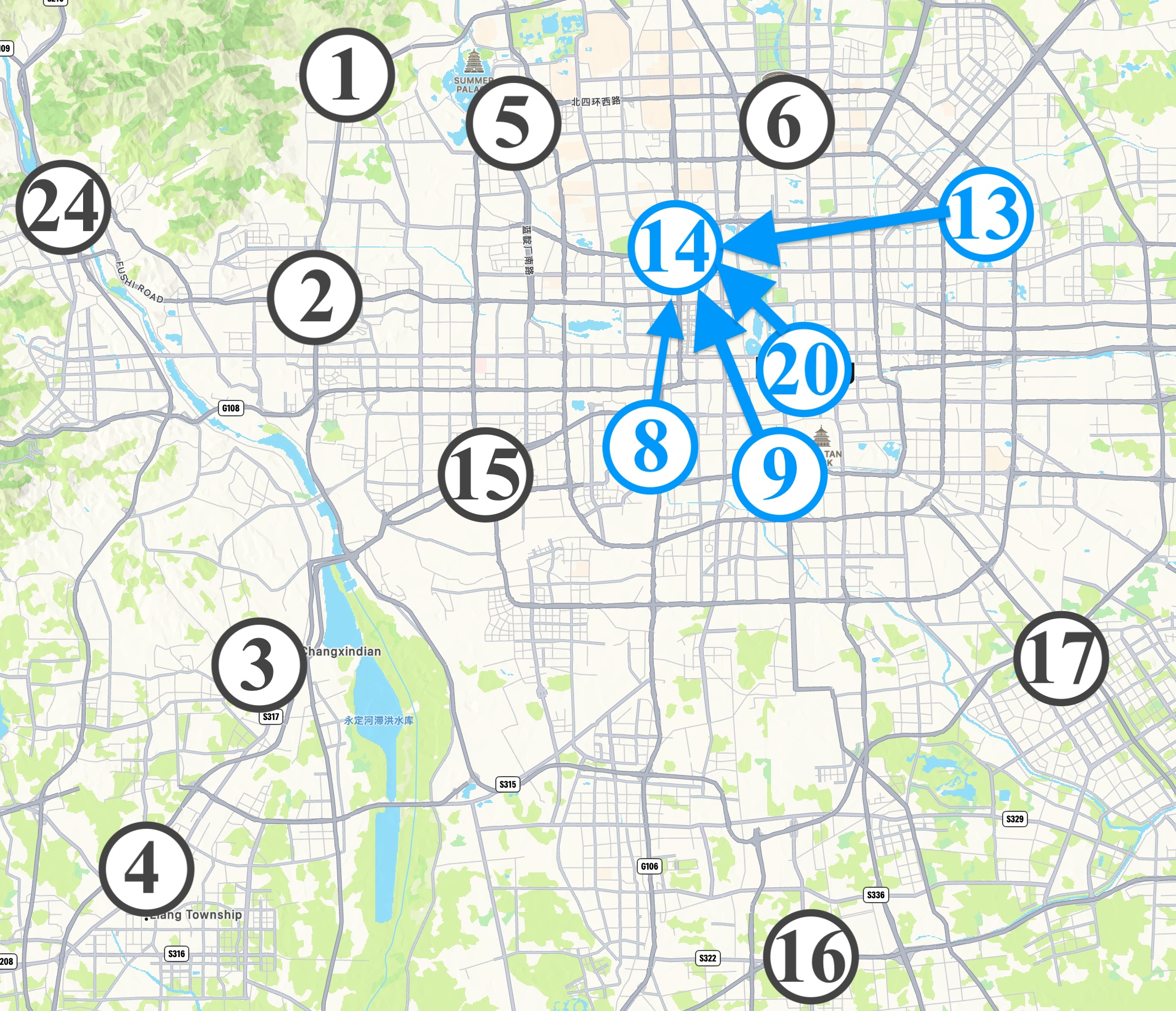}
 \caption{Causal relations in Figure \ref{fig:final_mat1}.}\label{fig:causal1}
\end{subfigure}
\begin{subfigure}[b]{.235\textwidth}
  \includegraphics[width=\linewidth]{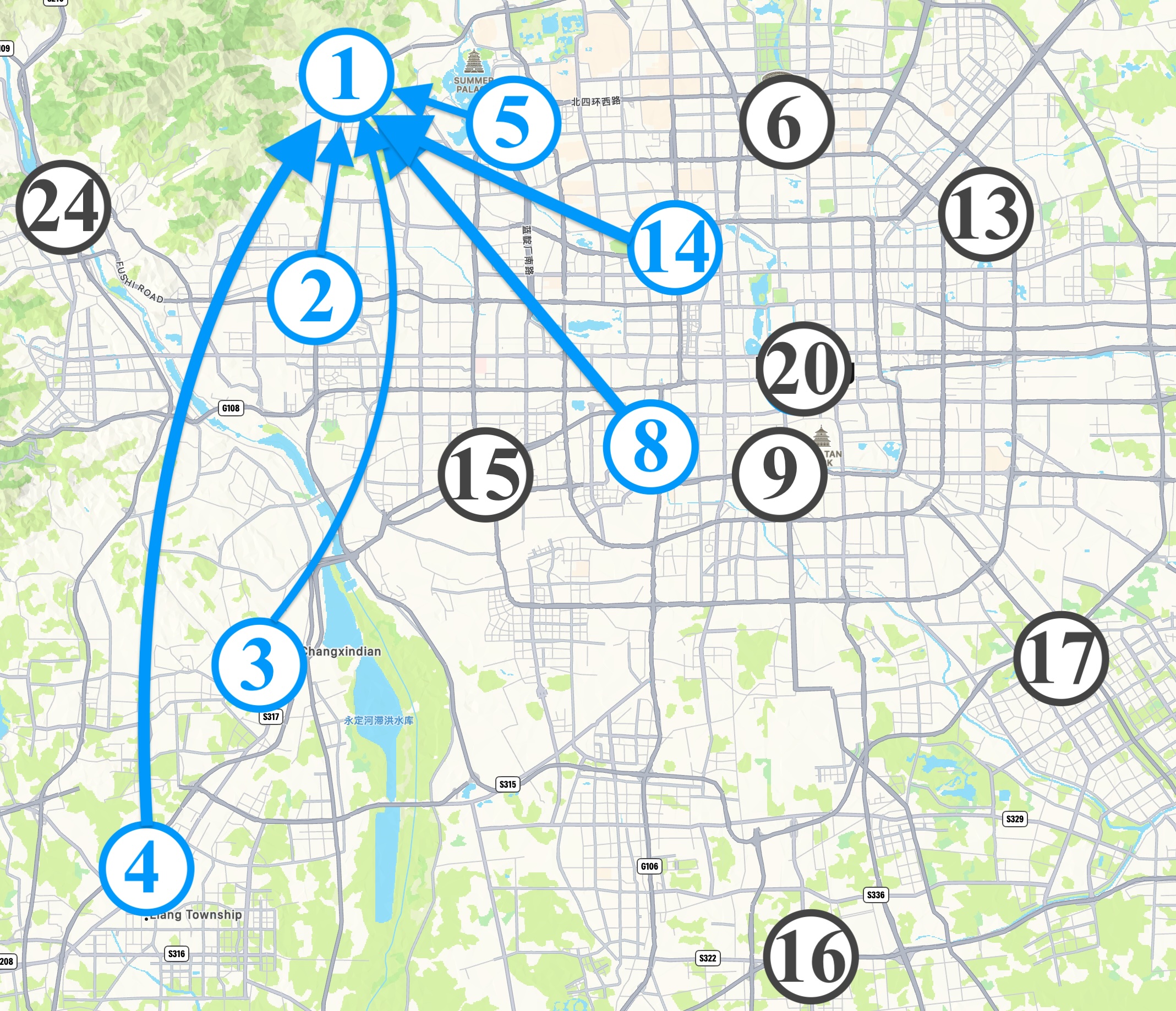}
  \caption{Causal relations in Figure \ref{fig:final_mat2}.}\label{fig:causal2}
\end{subfigure}
\caption{Discovered causal relationships.}
\label{fig:maps}
\end{figure}

\subsection{Visualization}\label{sec:exp_vis}
In this subsection, we provide visualization to further analyze \casper's ability of causality discovery.

\paragraph{Attention Maps.}
In Figure \ref{fig:att1}, we visualize two input time series from the test set of AQI-36, and their corresponding attention maps of the last encoder layer from different models, i.e., \casper, \casper\ w/o SCA (no causal gate) and \casper\ w/o SCA, PBD (no causal gate, PBD$\rightarrow$MLP).
Figure \ref{fig:x_1} and \ref{fig:x_2} are the inputs and the diamonds are the target query points.
Other figures show the attention scores between the query point and all other points in the context.
Figure \ref{fig:final_mat1} and \ref{fig:final_mat2} are the causal aware attention scores $\beta\cdot\alpha/Z$ in Equation \eqref{eq:sca}.
Figure \ref{fig:softmax_4_1}-\ref{fig:softmax_2_1} and Figure \ref{fig:softmax_4_2}-\ref{fig:softmax_2_2} are the attention scores $\alpha$.
First, by comparing the last two figures in each row, we can observe that the attention maps learned by \casper\ w/o SCA are sparser than \casper\ w/o SCA, PBD. 
This observation demonstrates that the frontdoor adjustment can effectively remove the noise and confounders by forcing the model to focus on only a small set of important points.
Second, by comparing \casper\ and \casper\ w/o SCA, we can observe that SCA further improves the sparsity of attention by focusing on only a few points.
Since $l_1$ norm is placed over the causal probabilities $\rho$ during training, therefore, points with non-zero attention weights are critical for the query point, which cannot be removed.
According to the Granger causality (Definition \ref{def:granger_causality_emb}), these non-zero points are the causes for the query.

\paragraph{Discovered Causal Relationships.}
We draw the most salient causal relationships in Figure \ref{fig:final_mat1} and \ref{fig:final_mat2} on the map of Beijing in Figure \ref{fig:causal1} and \ref{fig:causal2}.
Each number corresponds to a sensor and the arrow width corresponds to the attention weights.
It is evident that for the query sensors 14 and 1 in the two figures, not every nearby sensor has a causal relationship with them.
In Figure \ref{fig:causal1}, although the sensors 6, 5, 15 are very close to the query 14, \casper\ discovers that there is no causal relationship among them.
We conjecture that this is because the wind blew generally westward and northward in Beijing for Figure \ref{fig:final_mat1} and \ref{fig:final_mat2} \cite{zhang2012impact}.
In Figure \ref{fig:causal2}, the neighboring sensors 24,  15, 6, 20, 9, 13 are not regarded as the causes for the query sensor 1.
According to the spatial relationships, the information of the sensors 15, 6, 20, 9, 13 might be included in the causal sensors such as 2, 5 14, and 8, since the causal sensors are in between with the query and these non-causal sensors.
However, there is no other sensor between 1 and 24.
By taking a closer look at the upper-left corner of the map, we find that sensors 1 and 24 are separated by the Fragrant Hills. 
Therefore, the air quality at sensor 1 might not be directly influenced by sensor 24 for the period of the input data.
The two examples in Figure \ref{fig:maps} show that the proposed \casper\ could effectively discover the causal relationships among sensors, which provides better insights for further data analysis.
Additionally, the two examples also show that the simple distance-based sensor network construction is biased, which contains many non-causal correlations since it ignores other factors in the real world, such as wind direction and terrain.

\subsection{Causal Graph Discovery on the Quasi-Realistic Dataset}
In the real world, the ground truth causal graphs are usually unavailable, and thus it is difficult to quantitatively evaluate the ability of causal graph discovery. 
Therefore, we follow \cite{cheng2022cuts} and evaluate our \casper\ on a quasi-realistic data called DREAM-3 \cite{prill2010towards}, which is a gene expression data regulation dataset.

DREAM-3 contains $N=100$ gene expression levels and the length of the expressions is $T=21$.
The goal is to discover the causal relationship among the 100 gene expression levels.
For DREAM-3, we train \casper\ via the imputation task, and use similar training configurations as  AQI-36.
Following \cite{cheng2022cuts}, we use AUC between the ground-truth graph and the discovered graph as the evaluation metric.
Specifically, for \casper\ we obtain a causal weight matrix $\mathbf{C}_{i,t}\in\mathbb{R}^{N\times T}$ for each data point $x_{i,t}$, where $\mathbf{C}_{i,t}[i',t']=\beta_{i',t'}$ in Equation \ref{eq:gumbel}.
By concatenating all the $\mathbf{C}_{i,t}$ together, where $i\in\{1,\cdots,N\}$ and $t\in\{1,\cdots,T\}$, we have a tensor $\mathcal{C}\in\mathbb{R}^{N\times T\times N \times T}$, where the first two dimensions correspond to the data point $x_{i, t}$, and the last two dimensions correspond to the causal weight $\mathbf{C}_{i,t}$ for $x_{i,t}$.
We obtain the final causal weight matrix by max pooling over the two dimensions corresponding to the time, and normalizing by $T^2$:
$\mathbf{A}=\text{max-pool}(\mathcal{C},\text{dim=}2,4)/T^2\in\mathbb{R}^{N\times N}$.

The results of \casper\ and several SOTA time series causal discovery baselines are presented in Table \ref{tab:dream3}, where the results of baselines are copied from \cite{cheng2022cuts}.
Table \ref{tab:dream3} shows that \casper\ achieves the highest AUC score.
This experiment quantitatively demonstrates that \casper\ has a strong ability of discovering causal relationships.

\begin{table}[t!]
    \footnotesize
    \setlength\tabcolsep{2pt}
    \centering
    \caption{Causal Graph Discovery on DREAM-3.}\label{tab:dream3}
    \begin{tabular}{l|ccccccc}
        \toprule
        Models & PCMCI\cite{runge2019detecting} & NGC\cite{tank2021neural} & eSRU\cite{khanna2019economy} & LCCM\cite{de2020latent} & NGM\cite{bellot2021neural} & CUTS\cite{cheng2022cuts} & \casper\\
        \midrule
        AUC & 0.5517 & 0.5579 & 0.5587 & 0.5046 & 0.5477 & 0.5915 & \textbf{0.6325}\\
        \bottomrule
    \end{tabular}
\end{table}

\subsection{More Results}
In this subsection, we provide more experimental results of \casper, including convergence of $\rho$, and sensitivity analysis. 

\paragraph{Convergence of $\rho$.}
We set $0.1$ and $0.9$ as the thresholds to round $\rho$. 
Specifically, if $\rho\leq0.1$ then we regard $\rho$ has converged to $0$; similarly, if $\rho\geq0.9$, then we regard $\rho$ has converged to $1$.
The statistical results of the AQI-36 dataset show that 98\% $\rho$ converges to 0 or 1, which corroborates Theorem \ref{thm:convergence}.

\paragraph{Sensitivity Analysis.}
We present the results of sensitivity experiments of $\lambda$ and the number of prompts $N_P$ in Figure \ref{fig:lambda}-\ref{fig:prompts}.
For $\lambda$, the lowest MAE can be obtained when $\lambda$ is around 0.001.
For $N_P$, the best results can be obtained when $N_P\in[200,1400]$.

\textbf{\begin{figure}[t]
\centering
\begin{subfigure}[b]{.236\textwidth}
\includegraphics[width=\linewidth]{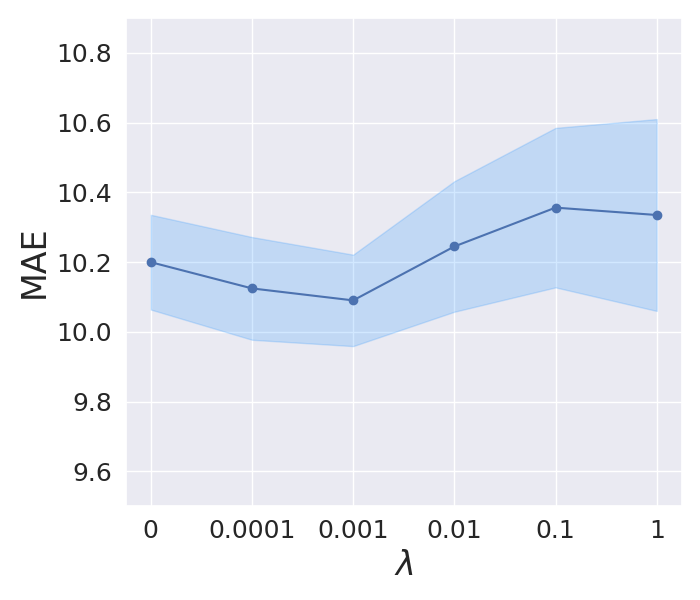}
 \caption{The weight of $l_1$ norm $\lambda$.}\label{fig:lambda}
\end{subfigure}
\begin{subfigure}[b]{.236\textwidth}
  \includegraphics[width=\linewidth]{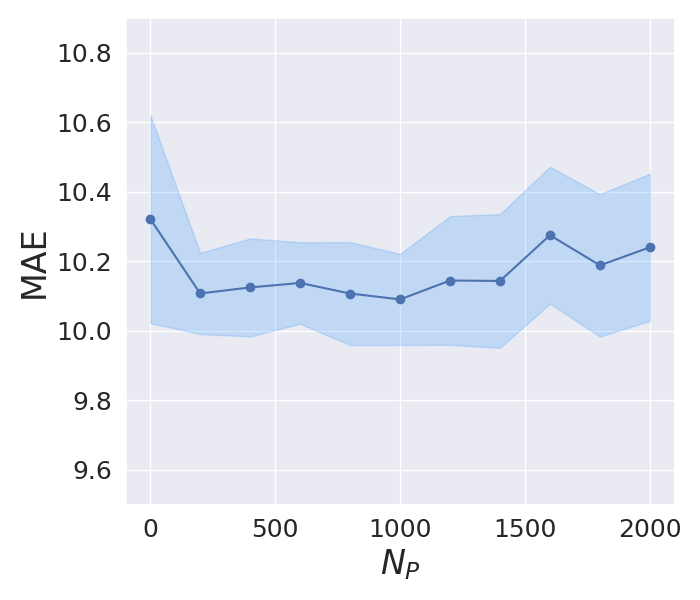}
  \caption{The number of prompts $N_P$.}\label{fig:prompts}
\end{subfigure}
\caption{Sensitivity experiments.}
\label{fig:sensitivity}
\end{figure}}

\section{Related Work}
In this section, we briefly review the most relevant works to ours, including spatiotemporal time series methods as well as causal inference methods.

\subsection{Spatiotemporal Time Series Imputation}
Spatiotemporal time series imputation is one of the fundamental tasks for time series analysis \cite{esling2012time,wang2024deep,zheng2024heterogeneous}.
Traditional machine learning approaches are based on statistical analysis, such as linear autoregression \cite{yi2016st,durbin2012time} and matrix factorization \cite{yu2016temporal}.
At present, deep learning methods have become the mainstream.
Most existing deep learning methods are based on Recurrent Neural Networks (RNN).
GRU-D \cite{che2018recurrent} is one of the first RNN-based imputation models.
BRITS \cite{cao2018brits} leverage bi-directional RNN to impute missing data.
GAIN \cite{luo2018multivariate} and E2GAN \cite{luo2019e2gan} further apply Generative Adversarial Network (GAN) \cite{goodfellow2014generative} to enhance the performance.
mTAND \cite{shukla2020multi} adds attention mechanism to RNN.
These methods suffer from error propagation and accumulation brought by the auto-regression nature of RNN.
To address this issue, non-autoregressive methods are proposed such as NAOMI \cite{liu2019naomi}, NRTSI \cite{shan2023nrtsi} and the recent Transformer \cite{vaswani2017attention} based methods \cite{yildiz2022multivariate}.
There are also some other types of methods, such as Ordinary Differential Equations (ODEs) methods \cite{rubanova2019latent,rubanova2019latent}, state space models \cite{alcaraz2022diffusion} and diffusion models \cite{tashiro2021csdi}
The above methods mainly focus on the temporal patterns of time series, yet largely ignore the spatial relationships, e.g., distance, among time series.
To capture spatial relationships, graph neural networks \cite{kipf2016semi,velivckovic2017graph} are extended to the spatiotemporal setting.
LG-ODE \cite{huang2020learning} combines graph neural networks with ODE methods \cite{rubanova2019latent}.
RETIME \cite{jing2022retrieval} introduces a retrieval-based time series model, which leverages retrieved time series as an augmentation for the target time series.
NET$^3$ \cite{jing2021network} introduces a tensor graph neural network to model the high-order relationships among time series.
GRIN \cite{cini2021filling} introduces a bidirectional message passing RNN with a spatial decoder.
SPIN \cite{marisca2022learning} presents a sparse spatiotemporal graph neural network for spatiotemporal time series imputation.
Recently, PoGeVon \cite{wang2023networked} proposes a position-aware graph neural network based variational auto-encoder to impute both time series and edges.
However, these methods try to exploit all the available related information for the target missing point, without distinguishing the causal and non-causal relationships, which might have the overfitting problem and make the model vulnerable to noise.
Our proposed \casper\ could distinguish causal and non-causal relationships.

\subsection{Causal Inference}
Causality theory \cite{pearl2018book} provides theoretical guidance to design causality-aware models.
It has been widely explored in the computer vision domain to discover causal relationships \cite{bengio2019meta}, generate counter-factual samples \cite{abbasnejad2020counterfactual,yue2021counterfactual,kocaoglu2018causalgan} and reduce bias \cite{hu2021distilling,yang2021causal,qi2020two}.
In the graph mining domain, 
CGI \cite{feng2021should} studies how to select trustworthy neighbors during inference;
CLEAR \cite{ma2022clear} explores how to generate counterfactual explanations for graph-level prediction models based on Independent Component Analysis (ICA) \cite{khemakhem2020variational};
HyperSCI \cite{ma2022learning} explores the Individual Treatment Effect (ITE) on hyper-graphs.
NEAT \cite{ma2023look} investigates the impact of 
MRSA infection
via the Neyman-Rubin potential outcome framework \cite{rubin2005causal}.
CAL \cite{sui2022causal} introduces a causal attention learning framework for graph neural networks based on the backdoor adjustment \cite{pearl2018book}.
There are two differences between \casper\ and the above graph methods: 
(1) our setting is the \emph{dynamic} spatiotemporal setting but their setting is \emph{static} graph;
(2) \casper\ is based on the frontdoor adjustment and Granger causality, which is fundamentally different from their theoretical basis of causality.
In the time series domain, the
Granger causality \cite{granger1969investigating} is widely used for analyzing the causality between time series in the forecasting setting.
GrID-Net \cite{wu2021granger} leverages the Granger causality to infer regulatory locus–gene links.
cLSTM \cite{tank2021neural} and economy-SRU \cite{khanna2019economy} integrates the Granger causality with LSTM \cite{hochreiter1997long} and SRU \cite{oliva2017statistical}.
However, these methods require the input time series to be fully observed.
CUTS \cite{cheng2022cuts} is a recently proposed two-stage model, which first imputes missing data and then discovers causality between time series.
There are three major differences between \casper\ and CUTS.
(1) CUTS does not consider confounders, while \casper\ removes confounders via the frontdoor adjustment.
(2) CUTS has to re-train the causal model for each input segment, but \casper\ does not have such a requirement. 
(3) CUTS is a two-stage model, while \casper\ is a one-stage end-to-end model.

\section{Conclusion}
In this paper, we review the spatiotemporal time series imputation task via the Structure Causal Model (SCM), which shows the causal relationships among the input, output, embeddings, and confounders.
The confounders could open shortcut backdoor paths between the input and output, which could mislead the model to learn the non-causal relationships.
We use the frontdoor adjustment to block the backdoor paths.
Based on the results of the frontdoor adjustment, we propose a novel Causality-Aware Spatiotemporal Graph Neural Network (\casper), which is comprised of a Prompt Based Decoder (PBD) and an encoder equipped with Spatiotemporal Causal Attention (SCA).
The proposed PBD could reduce the impact of the confounders at a high level.
For SCA, we first extend the definition of Granger causality for time series to embeddings.
Then we introduce the architecture of SCA based on the definition, which could discover the sparse causal relationships among embeddings.
Theoretical analysis shows that SCA decides causal and non-casual relationships based on the values of gradients.
Experimental results on three real-world benchmark datasets show that \casper\ could outperform the baseline methods for the imputation task.
Further analysis shows that \casper\ could effectively discover the sparse causal relationships.

\bibliographystyle{ACM-Reference-Format}
\bibliography{sample-base}

\end{document}